\documentclass[11pt,letterpaper]{article}

\usepackage[margin=1in]{geometry}
\usepackage{hyperref}
\usepackage{hyperref,enumitem}
\hypersetup{
	colorlinks=true,
	citecolor = blue       % false: boxed links; true: colored links
}
\setlist{noitemsep,leftmargin=\parindent,topsep=2pt}
\setlist{noitemsep,topsep=2pt}
\usepackage{natbib}
\usepackage{url}            %
\usepackage{xspace}
\usepackage{graphicx}
\usepackage{xcolor}         % colors
\usepackage{amsmath,amssymb,amsthm}
\usepackage{thm-restate}
\usepackage{booktabs} %
\usepackage[font=small,labelfont=bf]{caption}
\usepackage{subcaption}
\usepackage{multirow}
\usepackage{authblk}

\usepackage{comment}
\usepackage{algorithm}
\usepackage{wrapfig}
\usepackage{algorithmic}

% \newtheoremstyle{nonindented}{1ex}{1ex}{}{}{\bfseries}{.}{.5em}{}
% \newtheoremstyle{indented}{1ex}{1ex}{\itshape\addtolength{\leftskip}{0.6cm}\addtolength{\rightskip}{0.6cm}}{}{\bfseries}{.}{.5em}{}
% \theoremstyle{nonindented}
%\newtheorem{domain}{Game Domain}

% \usepackage[colorlinks,
%             linkcolor=red,
%             anchorcolor=blue,
%             citecolor=blue
%             ]{hyperref}
% \theoremstyle{indented}
%\newtheorem{question}{Question}[section]
%\newtheorem*{direction*}{Research Direction}
%\newtheorem*{conjecture*}{Conjecture}
% \theoremstyle{plain}
 
\newtheorem{theorem}{Theorem} 

\newtheorem{assumption}{Assumption}
\newtheorem{lemma}{Lemma}
\newtheorem{proposition}{Proposition}

\newtheorem{definition}{Definition}
\newtheorem{remark}{Remark}

\renewcommand{\hat}{\widehat}
\renewcommand{\tilde}{\widetilde}
\renewcommand{\bar}{\overline}
% \newcommand{\iid}{i.i.d.\ }

%Operators: These operators are such that a subscript appears below
%in \[ \] math mode, and to the bottom right in regular $ $ math mode

%regular version

\def\min{\qopname\relax n{min}}
\def\max2{\qopname\relax n{max2}}
\def\max{\qopname\relax n{max}}

\def\argmax{\qopname\relax n{argmax}}

\def\opt{\qopname\relax n{OPT}}
\def\rgt{\qopname\relax n{Regret}}

%bold version
\def\Pr{\qopname\relax n{\mathbf{Pr}}}
\def\Ex{\qopname\relax n{\mathbb{E}}}

\newcommand{\norm}[1]{\left\Vert #1 \right\Vert}

\newcommand{\expect}[2]{\Ex_{#1}\left[#2\right]}

\def\1{\mathbb{I}}
\def\A{\mathcal{A}}
\def\B{\mathcal{B}}
\def\C{\mathcal{C}}

\def\L{\mathcal{L}}

\def\S{\mathcal{S}}

\def\W{\mathcal{W}}

\def\RR{\mathbb{R}}
% Vector simples 

\def\part{P}

\def\eps{\varepsilon}

% \newcommand{\indep}{\perp \!\!\! \perp}

% \newcommand{\one}{{\bf 1}}
% \newcommand{\zero}{{\bf 0}}

%Plain eps or pdf figure. Use IPE to embed tex in it.

%Combined PS/Latex figure. This is option of choice for including tex
%code from xfig. Remember to export from xfig using "combined ps/latex" option

%Combined PDF/Latex figure. This is option of choice for including tex
%code from xfig. Remember to export from xfig using "combined pdf/latex" option

%Algorithmic Environment stuff
% \usepackage[ruled]{algorithm2e} % For algorithms
% \renewcommand{\algorithmcfname}{ALGORITHM}
% \SetAlFnt{\small}
% \SetAlCapFnt{\small}
% \SetAlCapNameFnt{\small}
% \SetAlCapHSkip{0pt}
% \IncMargin{-\parindent}

%LP environment stuff

\newenvironment{lp*}{\begin{equation*}  \begin{array}{lll}}{\end{array}\end{equation*}}

%% MISC

% \newcommand{\tl}[1]{}
% \newcommand{\hf}[1]{}
% \newcommand{\zf}[1]{}
% \newcommand{\bug}[1]{}

\newcommand{\name}{\texttt{IB}}
\newcommand{\algname}{\texttt{PAMM}}

% The \author macro works with any number of authors. There are two commands
% used to separate the names and addresses of multiple authors: \And and \AND.
%
% Using \And between authors leaves it to LaTeX to determine where to break the
% lines. Using \AND forces a line break at that point. So, if LaTeX puts 3 of 4
% authors names on the first line, and the last on the second line, try using
% \AND instead of \And before the third author name.

\begin{document}

\title{Incrementality Bidding via Reinforcement Learning under Mixed and Delayed Reward\thanks{Part of the work of Haifeng Xu was completed while visiting Google Research.}}

\author[a]{Ashwinkumar Badanidiyuru}
\author[a]{Zhe Feng}
\author[b]{Tianxi Li}
\author[c]{Haifeng Xu}

\affil[a]{Google Research, Mountain View \authorcr \texttt{ashwinkumarbv,zhef@google.com}}
\affil[b]{Department of Statistics, University of Virginia,  \authorcr \texttt{tianxili@virginia.edu}}
\affil[c]{Department of Computer Science, University of Chicago \authorcr \texttt{haifengxu@uchicago.edu}}

\date{January 13, 2022}

\maketitle

\begin{abstract}
Incrementality, which  measures the causal effect of showing an ad to a potential customer (e.g. a user in an internet platform) versus not, is a central object for advertisers in online advertising platforms. This paper  investigates the problem of how an advertiser can learn to optimize the bidding sequence in an online manner \emph{without} knowing the incrementality parameters in advance. We formulate the offline version of this problem as a specially structured episodic Markov Decision Process (MDP) and then, for its online learning counterpart,  propose a novel reinforcement learning (RL) algorithm with regret at most $\widetilde{O}(H^2\sqrt{T})$, which depends on the number of rounds $H$ and number of episodes $T$, but does not depend on the number of actions (i.e., possible bids). A fundamental difference between our learning problem from standard RL problems is that the realized reward feedback from conversion incrementality is \emph{mixed} and \emph{delayed}. To handle this difficulty we propose and analyze a novel pairwise moment-matching algorithm to learn the conversion incrementality, which we believe is of independent  interest. 
\end{abstract}
\section{Introduction}

Nowadays, online advertising systems (e.g. Google, Facebook, and Amazon) have demonstrated their significant power for connecting advertisers and potential customers. Moreover, automated auctions are widely used in online ad platforms for matching advertisers and users, and for price discovery~\citep{Varian06, EOS07}. Many advertisers are bidding repeatedly online to show ads to users. Therefore,  an important problem for advertisers is to design efficient bidding algorithms to maximize their accumulated utility.  Moreover, this is also an important problem for   online ad platforms, because \emph{auto-bidding} --- i.e., the advertisers authorize ad platforms to bid on behalf of them ---has become prevalent in online advertising  \citep{fb, google}.

A growing body of work is investigating the problem of \emph{learning to bid} in repeated auctions~\citep{Weed16,Feng18,Balseiro19,HZW20,HZFOW20,BFG21, Noti21, Nedelec22}. However, existing work assumes that the value of showing ads to a user (after winning the auction) comes from either an oblivious adversary or is sampled from an underlying fixed distribution. Unfortunately, this ignored the crucial \emph{causal effect} of showing an ad to a potential customer and cannot capture how much an ad will \emph{change} the user's conversion rate, which should have been what the advertiser is bidding for.  For example, if a user sees the same ad multiple times in a very short time period, then the conversion \footnote{A conversion represents a desired interaction between the user and advertiser, e.g., download the App or buy the products from the advertiser.} of this user may not be much better --- in fact, may even be worse --- than that of showing ads to this user only once within this period. Such multiple redundant ad placements are commonly known in reality to be cost-inefficient for the advertisers,  but cannot be captured by previous models of learning to bid with either random or completely adversarial ad values. To overcome the drawbacks of previous approaches in measuring the value of an advertising opportunity, a recent visionary whitepaper by~\citet{Lewis2018IncrementalityB} introduced the important notion of \emph{incrementality}, which properly quantifies this causal effect of showing an ad to the user as validated by massive real data from leading advertising platforms. However, \cite{Lewis2018IncrementalityB} focuses on the offline learning problem of estimating incrementality from past data. The online learning of incrementality parameters, as well as how to utilize it to optimize bidding sequences, has not been studied; this is what we embark on in this paper. 

In Figure~\ref{fig:conversion-incrementality}, %\tl{Is this figure generated from real data or simulation? More information might be helpful.} \zf{This is a fake figure, I will generate a new one and add the description.}
we visualize the conversion incrementality caused by three ad placements and used the colored areas to capture the incrementality caused by each ad placement. 
There are two main challenges in learning incrementality, i.e., \emph{delayed} and \emph{mixed} conversion feedback. First, the conversion may not happen immediately when the ad is shown to the user, even though the conversion rate may quickly peak after showing the ad (see Ad 2 and Ad 3 in Figure~\ref{fig:conversion-incrementality}). \begin{wrapfigure}{r}{0.49\textwidth}
\begin{center}
\includegraphics[width=0.48\textwidth]{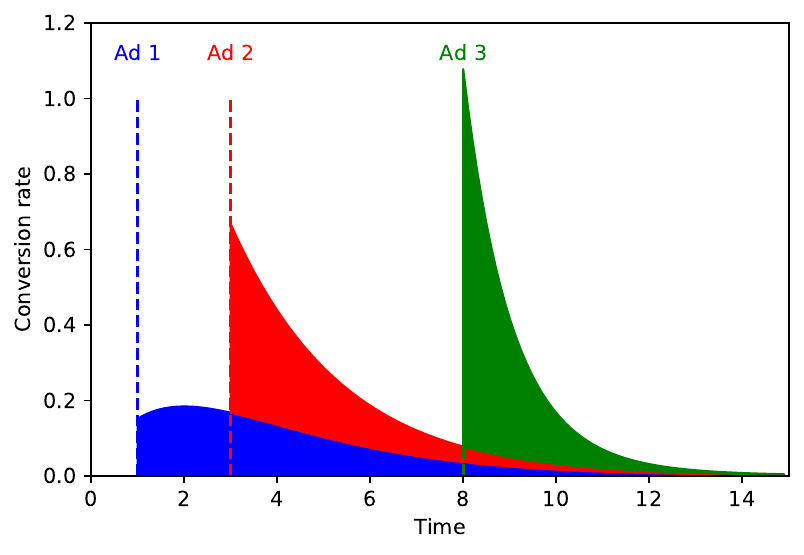}
\end{center}
\caption{Toy example: visualization of the conversion incrementality triggered by three ad impressions. The rate of conversion incrementality of each ad follows \texttt{Gamma}(2, 2), \texttt{Exp}(0.5) (shifted right by 3) and \texttt{Exp}(1) (shifted right by 8) distributions, respectively.}
\label{fig:conversion-incrementality}
\end{wrapfigure}
Such delayed conversion feedback in online advertising is studied only recently in offline setups, e.g.,~\citep{Chapelle14, Saito20, Su20, Badanidiyuru21}.
Second, when a conversion happens, it is a mixed effect from multiple previous ad placements before the conversion.
In this work, we formulate incrementality bidding as an episodic Markov Decision Process (MDP), where each episode represents an interaction with a single user drawn from a population of the same characteristics (and thus assumed to have the same incrementality parameters). The advertiser wants to learn the incrementality and optimize bidding for this user population. This is hardly possible in general given the two difficulties above as well as the potentially complex dependence of incrementality on the entire history. We thus make a Markovian assumption that the conversion incrementality of an ad placement at each round $h$ only depends on its \emph{last} ad impression shown before this round. In addition, we introduce the \emph{heterogeneous Poisson process} to formally capture the mixed and delayed rewards feedback in the conversion process (Poisson process is also mentioned by~\citet{Lewis2018IncrementalityB}, but only qualitatively). 

To optimize the sequential bidding strategies in the online setup, there is an intrinsic exploration versus exploitation tradeoff in incrementality bidding. On one hand, the advertiser needs to bid high in order to win   the ad placement opportunity and also to learn the conversion incrementality. On the other hand, she also doesn't want to bid too high such that her conversion reward may be less than her cost, i.e., the payment due to winning the auction. This observation naturally motivates our design of a model-based RL algorithm that adopts a UCB-style approach to balance exploration and exploitation.\footnote{We remark that designing model-free no-regret RL algorithms is an intriguing open problem. Due to delayed and mixed reward feedback, it appears challenging to use algorithms like Q-learning in the incrementality bidding problem. } 

\paragraph{Our Results and Contributions.}
The main contributions of this paper are two folds: (1) we introduce an episodic MDP, coupled with a heterogeneous Poisson process, to capture the incrementality bidding problem; (2) we design a novel reinforcement learning algorithm for the advertiser to optimize her sequential bidding while learning incrementality simultaneously. \emph{Technically}, our main result is the design of an RL algorithm for incrementality bidding that provably has regret at most $\widetilde{O}(H^2\sqrt{T})$ and is near-optimal in terms of the dependence on $T$. \emph{Conceptually}, our result demonstrates the possibility of designing highly-efficient RL algorithms for incrementality bidding despite its various challenges.  To the best of our knowledge, this is the first work that analyzes incrementality bidding in an online setting theoretically.  

Notably, our regret bound is independent of the number of possible bids (equiv. actions in standard RL).  This is due to the special structures of the incrementality bidding problem, which has a small \emph{effective} action space.
%(1) transition probability is easy to learn given the full information feedback in online auctions; (2) the \emph{effective} action space is limited, i.e., 
Specifically, there are only two effective outcomes for an advertiser---winning or losing--- that eventually affect the state transitions. The conversion incrementality only happens if the learner wins. Finally, we remark that our results are independent of auction formats and all our results hold for second price auctions, first price auctions, and other standard single-item auction formats. To achieve this regret bound, we provide a novel regret decomposition to incorporate with the convergence rate analysis of our novel parameter estimation method. 

One key technical novelty of this paper is a new parameter estimation method, \emph{Pairwise Moment-Matching} (\algname{}) algorithm, that can estimate reward parameters under mixed and delayed conversion feedback. In this type of models with mixture effects, standard likelihood-based model fitting procedure would lead to solving non-convex optimization problems resulting in difficulties for computation and analysis, as well as unpleasant gap between theory and algorithm \citep{jain2017non}. However, \algname{}   estimates incrementality parameters in an online and computationally efficient manner without the need of solving nonconvex optimization problems. The estimators are provably consistent with a nearly-optimal convergence rate. %\zf{I feel we may need to strengthen this a bit more. This current statement sounds not very exciting.}\hf{what about the current version?} 
We believe this novel technique will be of independent interest for other online learning problems with delayed and mixed reward feedback. 

\paragraph{Related Work.}
As mentioned in the introduction, our work is highly inspired by the visionary work of ~\citet{Lewis2018IncrementalityB}, who firstly proposed incrementality bidding (and attribution). However, the method proposed in \citep{Lewis2018IncrementalityB} lacks a theoretical guarantee and cannot be adapted to online learning settings. Our work is generally related to the \emph{learning to bid} literature~\citep{Weed16,Feng18,Balseiro19,HZW20, HZFOW20,BFG21}. However, all these existing papers usually model this problem as (contextual) bandits and don't consider the causal effect of showing an ad to the user captured by incrementality (i.e., the additional value or conversion given the ad placements in previous rounds), which causes the main difference from this paper. %\hf{is my following description correct?}
Specifically, in all these previous works \citep{Weed16,Feng18,Balseiro19,HZW20, HZFOW20,BFG21}, the reward for an ad impression is realized immediately at the current round whereas, in our model, any ad impression only increases the rate of conversions, which however may be realized in any later time following a Poisson process and, more importantly, is mixed with the incrementality of other ad impressions. We believe our model captures the effect of advertising more realistically.   
%Among these previous papers,  closely related to ours are~\citep{Weed16,Feng18},   which assume that the advertiser can only observe the value when she wins. In our model, the conversions happen only when the advertiser wins the auction but it comes in a more complicated way, i.e., the conversion feedback follows an inhomogeneous Poisson process. 
Some other less related works include delayed feedback in other different RL problems, which have only been studied  recently~\citep{lancewicki2022aaai,howson2021delayed}. We handle delayed conversion feedback through a parametric inhomogeneous Poisson process, which is different from these works. 
\section{The Sequential Optimization Problem of   Incrementality Bidding  (\name{}) } 
This paper adopts the perspective of an online advertiser, and studies the advertiser's learning problem of optimizing her  sequential bidding policy. Next, we will first describe the background and motivation of this problem and then introduce the formal model. 

 We consider the sequential bidding problem for a single advertiser --- i.e., the \emph{decision maker} or \emph{learner}  ---  in online advertising systems. The advertiser participates online advertising auctions in order to win opportunities for showing ads to Internet users  and, ultimately, gain conversions (i.e., purchases of products or services) from Internet users.   
%This is similar to episodic RL, but crucially differs. 
The interaction between the advertiser and   Internet users happens in an \emph{episodic} manner; each episode   corresponds to one Internet user. Specifically, each episode has $H$ rounds, during which the advertiser (learner) will interact with the user of that episode for  at most  $H$ times.  Notably, $H$ is a finite number since (1) an interaction happens only when a user visits a particular webpage and (2) if an advertiser already bids for a user for sufficiently many times and still was not able to get conversion, the advertiser typically will cease to advertise to this user (at least for some time).   %thus  $H$ in reality captures how many times a user would search on Google within a day.\footnote{It is generally assumed that a user's click behavior will be fully refreshed each day. add citations here XXX.  } 
We use $T$ to denote the total number of episodes.\footnote{In reality, $H$  is usually at the scale of around $50$ (the number of times an advertiser would like to interact with a user), whereas the number of similar Internet users (i.e., episodes) $T$ is of the scale of millions.}
To  focus on the fundamentals of the problem,   this work studies the basic setup with $T$  i.i.d. episodes (i.e., Internet users) that occur \emph{sequentially} and evenly distributed $H$ rounds at integer time point $1,2,\cdots, H$ within each episode.  Interesting future directions  include examining the more general settings in which each Internet user in the corresponding episode may depend  on a context feature,  the time points of the $H$ rounds  are different for different episodes, and the episodes may happen in batch (i.e., multiple similar users arrive at the same time).   
% \zf{There is a problem of this description, do we assume the user come one by one? Or they can come together?}

\paragraph{Conversion incrementality.} Central to our bidding optimization problem is the   notion of \emph{``conversion incrementality''} \citep{Lewis2018IncrementalityB}. Intuitively, incrementality captures how much \emph{increase} the advertiser's ads can boost the user's conversion rates. Notably, the reward of showing an ad is the incrementality while not the conversion rate itself. %\hf{Can we add some explanation about our novelty of using incrementality? E.g., did previous paper fail to consider this, and only used the conversion rate itself?} \zf{Added some discussions in related work. Will add some thing here.}
This is very different from the previous \emph{learning to bid} literature since they assume the conversion (or value) comes independently with the previous ad placements (i.e., from oblivious adversary or an underlying distribution). This is why previous studies almost all use the (contextual) bandits setup, while not MDP.  %\zf{@Haifeng, what do you think?}\hf{looks great! I added the last sentence --- this is why previous studies almost all use the multi-armed bandit setup, while not MDP. Does it make sense?}

 We now formally describe the model of incrementality bidding, following   \cite{Lewis2018IncrementalityB}.  
Within any episode $t \in [T]$, each round $h \in [H]$ has an \emph{incrementality rate function} $d_h(\tau; A_h, \theta_h)$ for any continuous time $\tau \in [h, \infty)$ which is the ``density'' of the conservation incrementality at any time $\tau \in [h, \infty)$. Intuitively,     $d_h(\tau; A_h, \theta_h)$ models how much an ad shown at round $h$ boosts the conversion rate density at any future time $\tau>h$.  Notably, the integral of this rate function $d_h(\tau; A_h, \theta_h)$ does not need to be 1 as there may be many conversions per user impression.
%\footnote{The quoted ``density'' here differs slightly from probability density since the integral of $d_h(\tau; A_h, \theta_h)$ does not need to be $1$. This is because   conversion rate may be lager than $1$ if the user will convert multiple times. }  
We assume that  $d_h(\tau; A_h, \theta_h)$ depends on an to-be-learnt unknown parameter $\theta_h$ and additionally a subset $A_h \subset [h]$ containing all previous rounds, at which our learner's ads were shown to the corresponding user of this episode. %\footnote{$d_h$ may additionally depend on the shown ads of other bidders. In this work, for theoretical simplicity, we ignore that factor and assume that the conversation rate of the learner depends only on its ads, not other bidders' ads. \hf{maybe deleted if need space.}} %Note that $A_h$ only depends on the learner's actions before $h$ and the  to-be-learnt parameter is just $\theta_h, h = 1, \cdots, H$.
Let $\theta = (\theta_1, \theta_2,\cdots, \theta_H)$ denote all the to-be-learnt parameters.

It is generally intractable to learn an arbitrary rate function $d_h(\tau; A_h, \theta_h)$. Next, we introduce a natural \emph{parameterized family} of functions for $d_h(\tau; A_h, \theta_h)$, with to-be-learnt parameters, in order to capture advertising applications. First, we have the following assumption for $d_h$,
\begin{assumption}[Markovian Incrementality]\label{assum:markov}
$d_h$ is only affected by the learner's \emph{last} ad impression before $h$, rather than the entire history $A_h$ of the learner's advertising performance. Formally, at current round $h$, suppose the last round at which the learner's ad was shown  is $h-l$, then the incrementality function $d_h(\tau; A_h, \theta_h) = d_h(\tau; \ell, \theta_h)$. \footnote{It is possible to have more features of the last ad impressions (instead of just the time gap $\ell$) in the state and our analysis will still go through as long as the number of states is still finite. For theoretical simplicity, in this paper, we focus on the case that the state is just $\ell$.}
\end{assumption}
%Remark: Another natural assumption to make is the $D(A_k, \theta_k) $ is submodular in $A_k$ --- that is, with strictly more ads shown before, the margianl increase of showing an add at time $k$ is diminishing. Then, Question \ref{ques:offline-opt} is basically online submodular maximization.
% \zf{@Haifeng, make this Markovian assumption more formal? Say a formal Assumption.}
The above assumption is built upon the sense that the users normally weighs more on the recently visited ads and are generally memory-less. Moreover, we assume each incrementality rate function $d_h(\tau; \ell, \theta_h)$  for  any $h \in [H]$ has the following parametric form: 
\begin{equation}\label{def:fnc-family}
 d_h(\tau; \ell, \theta_h)  =
\begin{cases}
\beta_h(\ell)  \lambda_h e^{-(\tau-h) \lambda_h}  \,  & \forall  \tau \in  [h,\infty), \\
0 & otherwise,
\end{cases} 
\end{equation}in which  $ \theta_h =   \{ \beta_h(\ell)\}_{\ell=1}^{h-1} \cup \{  \lambda_h \}$  contain all the parameters at round $h$ of the given \name{} instance. In other words, the incrementality rate is a re-scaled exponential distribution and the re-scaling is due to fact that each user impression may lead to more (or less) than one user conversions.
% \tl{Can we directly introduce $d_h(\tau;\ell, \theta_h)$ with its exponential form, instead of introducing (1) first? It seems (1) is not used elsewhere except for $d_h$, and we can save some space by merging them into one notation.} \zf{The conversion time can be $\infty$? Should we claim this is for notation simplicity and we can always truncate it?}
% \begin{equation}\label{def:fnc-family}
%     \text{Re-scaled exponential rate family: } \qquad   \text{exp}(\tau; \beta, \lambda   ) =
% \begin{cases}
% \beta \lambda e^{-\tau \lambda}  \,  & \forall  \tau \in  [0,\infty), \\
% 0 & otherwise,
% \end{cases} 
% \end{equation}
% in which $\beta , \lambda$ are  parameters. In other words, the above family increases all re-scaled exponential distributions and the re-scaling is due to domain feature that each user impression may lead to more than one user conversions.  Formally, to make sure that the incrementality at time $h$ starts from $h$, we assume  $$d_h(\tau; \ell, \theta_h) = \text{exp}(\tau-h; \beta_h(\ell), \lambda_h)$$ for  $h \in [H]$, in which   $ \theta_h =   \{ \beta_h(\ell)\}_{\ell=1}^{h-1} \cup \{  \lambda_h \}$  contain all the parameters at round $h$ of the given \name{} instance. 
In our RL formulation of the problem, these are all the to-be-learnt parameters. The Markovian property is reflected in $\beta_h(\ell)$. Note that $ \int_{\tau=h}^{\infty} d_h(\tau; \ell, \theta_h)   d \tau = \beta_h(\ell)$. So $\beta_h(\ell)$  can be viewed as the expected number of conversions triggered by the ad impression at round $h$, given the last time for showing the learner's ad is $\ell$ rounds before. In this paper, we assume $\forall h, \ell, c_\beta \leq \beta_h(\ell) \leq  C_\beta$ for some positive constants $c_\beta$ and $C_\beta$. Without loss of generality, we assume $C_\beta \leq 1$. \footnote{
In this paper, we assume $\beta_h(\ell)$ to be strictly positive, however, the conversion incrementality can be negative in practice, e.g., keep seeing the same ads very often may decrease the conversion from the users. Indeed, our results still hold when allowing negative $\beta_h(\ell)$, as long as $\beta_h$ and the time-varying rate $r(\tau)$ are bounded from zero. See the discussion in the proof of Theorem~\ref{thm:PoissonEstimation} in Appendix~\ref{app:poisson-process-estimation}. }

\begin{remark}
The assumption of exponential decaying rate in Equation \eqref{def:fnc-family} is not necessary for our approach to work, and is primarily for the exposition. Our techniques can be applied to most parametric family of rate functions such as  truncated exponential, logistic, Beta, and Gamma distributions~\citep{Lewis2018IncrementalityB}, as long as the CDF function is invertible.  
\end{remark} 
 
\paragraph{From conversion incrementality  to realized conversions. } While   incrementality  captures how the rate of conversions increases, it did not model how conversions are realized in reality. This would be important when we introduce our reinforcement learning problem --- after all, in real applications, the only observations our learner (an advertiser) can see are the realized conversions at different time, rather than the continuous conversion rate function. To capture this, we adopt the standard assumption that conversions are realized based on a continuous-time inhomogeneous \emph{Poisson process} \citep{Lewis2018IncrementalityB}. \ More specifically, let  the \emph{ordered} subset $W = \{ w_1,w_2,\cdots, w_n \} \subseteq [H ]$ denote the rounds  at which the learning agent wins within the episode (for convenience, let $w_0 = 0$). Then the  conversions arrive according to a Poisson process with a time-varying rate $r(\tau)$, defined as follows
\begin{equation}\label{eq:def-poison}
  r(\tau; w, \theta) = \sum_{i=1}^n  d_{w_i}(\tau;w_i - w_{i-1},\theta_{w_i}), \qquad  \forall  \tau \in  [0,H].
\end{equation}
More details of inhomogeneous Poisson process is deferred to Appendix~A.%\ref{app:inhomo-poisson-process}.
%  \tl{Should an formal definition of Poisson process be given?} \zf{Probably we can add it in Appendix to save space?}

%\tl{I slighlty modified this notation, as we need to emphasize that $r$ depends on $w$ and $\theta$.}

% \hf{I think the following assumption can be replaced by any function with finite cover number. But we can assume it for now and think about generalization later. } \zf{We don't need this exponential decaying assumption for now?}
%\zf{This sounds good to me. Isn't it equal to the conversion rate defined in Eq.~(\ref{eq:conversion-rate})?}

% Given the assumption of the conversion incrementality (Assumption~\ref{assum:markov}), we can re-parametrize $d_h$ in the following way,
% \begin{assumption}[Reparametrization of Conversion Incrementality]\label{assum:reparam-conv-incrementality}
% We assume  
% $$
% d_h(\tau;l,\theta_h) =
% \begin{cases}
% \beta_h(\ell) \cdot g_h(\tau; \lambda_h) \,  & \forall  \tau \in  [h,\infty), \\
% 0 & otherwise,
% \end{cases}
% $$
% is a parametrized distribution, with the to-be-learnt parameter $\beta_h = \{ \beta_h(\ell) \}_{\ell=1}^{H-1}$ and $\lambda_h$ for the conversion incrementality function $d_h(\tau; \ell, \theta_h)$, in which $\int_h^\infty g_h(\tau; \lambda_h) d\tau = 1$. In this case, $\theta_h = (\beta_h, \lambda_h)$.
% \end{assumption}
% Note, we assume $g_h$ function is known where the parameter $\lambda_h$ is to-be-learnt.

\paragraph{The optimization problem of bidding for incrementality.}  
While the above incrementality modeling of user conversions originates from the influential idea of \cite{Lewis2018IncrementalityB}, our  following   formulation of the optimal bidding problem under their user modeling   is new to the best of our knowledge, and so is its reinforcement learning solutions  presented in this paper for addressing the situations with unknown environment parameters.    

Following current practice,  we assume online ad opportunities are sold to advertisers  by  auctions, and our learner is one of these advertisers. For the ease of presentation, we restrict our descriptions to the widely adopted \textit{second price auctions}; however, all our results can be easily adapted to  any single-item auction format without entry fees (e.g., the first price auction).   Let $\B$ denote the  bidding space, which may be continuous. Notably, $0 \in \B$, meaning the bidder/learner can choose to not participate this round's auction. Moreover, denote the distribution of the highest bid among \emph{other} bidders (HOB) at round $h$ as  $F_h$ (the cumulative distribution function or CDF). %\tl{Is $f_h$ used anywhere? Previously, it was said that $\B$ not assumed to be continuous, so $f_h$ is not always defined. } \zf{Good point, remove the density definition.} %As a starting point, we assume distribution $F_h$ is known, though it is certainly interesting to consider an unknown $F_h$ for future research.
For any episode $t$ (corresponding to an Internet user), at each round $h$, the learner submits a bid $b \in \B$ and wins the   opportunity to show its ad to the user if her bid is the highest. Let $p_h: \B \rightarrow \RR_{\geq 0}$ represents the average (conditional) payment function of the learner if she wins. Let $v$ denote the learner's average value per conversion, which is known to the learner.  We assume $v$ is bounded, w.l.o.g., within $[0,1]$. Moreover, we can assume $\B\subseteq [0, 1]$. Then for any bid $b\in \B$, given the current state $\ell$, the immediate expected utility of the learner at round $h$ (for any episode) is, % \zf{We need to assume $v(\cdot)$ is known, right?}
\begin{equation}\label{eq:expected-utility}
u_h(b; \ell,\theta_h) = F_h(b) \cdot \left( \int_h^{\infty} v d_h(\tau; \ell, \theta_h) d\tau - p_h(b)\right) = F_h(b) \cdot (\beta_h(\ell)\cdot v - p_h(b))
\end{equation}

The astute reader may already find the expected utility function of the learner doesn't depend on parameter $\lambda_h$. This raises the question why we still need to know the function class of $d_h$ (i.e., exponential function) and why we still need to learn parameter $\lambda$. As we will see later in Section~\ref{sec:para-estimate}, due to the complexity of inhomogeneous Poisson process, it is crucial to know the function class of $d_h$ and we have to learn parameters $\lambda$ first in order to   get a good estimator to $\beta$ afterwards.
In second price auctions,  $p_h(b)$ is the expected second highest bid conditioning on winning, which can be computed as follows
\begin{equation}\label{eq:avg-cond-pay-spa}
p_h(b) = \frac{1}{F_h(b)} \int_{0}^{b} v d F_h(v) = b - \frac{1}{F_h(b)}\int_0^b F_h(v)dv.
\end{equation}
As mentioned above, our techniques apply to other   auction formats as well. For example, in first price auctions, $p_h(b) = b$ has an even simpler format.

% \subsection{The Problem of Sequential Competition for Mixture Rewards (\name{})}\label{sec:mdp-simple}

\section{The Optimal Policy of \name{} and its Reinforcement Learning}\label{sec:offline-DP}

 While it may not appear obvious at the first glance, we explain how \name{} can be reduced to a finite-horizon Markov Decision Process (MDP) in this section and then introduce the corresponding reinforcement learning problem for \name{} when the environment parameters --- namely, the incrementality parameters $ \theta_h =   \{ \beta_h(l)\}_{l=1}^{h-1} \cup \{  \lambda_h \}$ and HOB distribution $F_h$   --- are unknown.  %  \tl{This sentence is a bit confusing. I do not quite understand it. But at least we should use either --- or ``namely", but not both.}   

\subsection{The Offline Optimal Policy of \name{} }\label{subsec:mdp-formulation}
We now formulate the optimal planning problem for the \name{} problem in its offline version, i.e., when all environment parameters are known. We start by defining what is a policy for the \name{} problem. 

\begin{definition}[Policy]
A deterministic (dynamic)  policy is a sequence of mappings $\pi = (\pi_1,\cdots, \pi_H)$, in which mapping $\pi_h: [h] \to \B$ maps any $\ell \in [h]$ to a bid.  Here, $\ell$ is chosen such that the last winning of the learner happens at round $h-\ell$. %the number of consecutive loses since last winning under realizations of all past bidding.  
\end{definition}

Note that it suffice to consider policies that depend only on the number of consecutive loses $\ell$ due to the Markovian property of incrementality as mentioned above.

\paragraph{\bf The MDP re-formulation of \name{}.}  The MDP will have $H$ \emph{states}, with \emph{state} $\ell \in [H]$ denoting that the last winning round is $\ell$ rounds ago from current round. The \emph{action} set is   $ \B$, containing all possible bids.  Both the state and action space are the same for each round $h$, however the transition and rewards are different for different rounds. Specifically, at round $h$, the transition probability from state $l$ and action $b$ is to enter state $1$ at round $h+1$ with probability $F_{h}(b)$, i.e., the probability of winning the auction, and to enter state $\ell+1$ with probability $1 - F_{h}(b)$.  The immediate reward from transitioning is $r_h(\ell, b, 1) = \int_h^{\infty} v(\tau) d_h(\tau; \ell, \theta_h) d\tau - p_h(b)$ for tuple $(\ell,b,1)$ (the winning case) and $ r_h(\ell, b, \ell+1) = 0$ for tuple $(\ell, b, \ell+1)$ (the losing case). This last statement depends crucially on the independence of the incrementality across different winning rounds. The initial state of this MDP is always the lowest state, i.e., $\ell_1 = 1$. 

Consequently, the optimal policy in \name{} can   be computed via  standard dynamic programming. Specifically, when the action set $\B$ has finite support (i.e., the advertiser has finite bids to choose from),  the following proposition follows from the fact that MDPs with finite states, actions and horizons can be solved efficiently by dynamic programming \cite{RLbook}. % \tl{Proved or by reference? I do not see its proof in appendix. But if it is other papers, we should add the reference.}
\begin{proposition}\label{prop:offline-opt}
The offline optimal policy $\pi$ for any \name{} instance  can be computed by dynamic programming in $\text{poly}(|\B|,H)$ time. 
\end{proposition}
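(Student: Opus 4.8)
The plan is to verify that the \name{} problem, as re-formulated in the preceding paragraph, is literally a finite-horizon MDP with finite state space, finite action space, and finite horizon, and then invoke the standard fact that such MDPs are solved exactly by backward-induction dynamic programming in polynomial time. Concretely, I would first confirm the three finiteness conditions explicitly: the state space is $[H]$ (the number of rounds $\ell$ since the last win), so $|S| = H$; the horizon is $H$ rounds; and the action space is $\B$, which is finite with $|\B|$ bids by the hypothesis of the proposition. All transition probabilities and rewards are well-defined functions of the (known) environment parameters, since we are in the offline setting where $\theta_h$, $F_h$, and $p_h$ are all given.

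Next I would write down the backward-induction recursion that computes the optimal value function. Define $V_{H+1}(\ell) = 0$ for all states $\ell$, and for $h = H, H-1, \dots, 1$ set
\begin{equation*}
V_h(\ell) = \max_{b \in \B} \Big[ F_h(b)\big(r_h(\ell,b,1) + V_{h+1}(1)\big) + (1 - F_h(b))\big(0 + V_{h+1}(\ell+1)\big) \Big],
\end{equation*}
with $\pi_h(\ell)$ set to a maximizing bid $b$. Here $r_h(\ell,b,1) = \beta_h(\ell)\, v - p_h(b)$ by Equation~\eqref{eq:expected-utility}, and the two successor states $1$ and $\ell+1$ together with their probabilities $F_h(b)$ and $1 - F_h(b)$ are exactly the transition structure specified in the MDP re-formulation. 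By the standard optimality of dynamic programming for finite-horizon MDPs \cite{RLbook}, the policy $\pi = (\pi_1,\dots,\pi_H)$ extracted from these maximizers is optimal, starting from the fixed initial state $\ell_1 = 1$.

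The running-time bound then follows by a direct accounting of the recursion: there are $H$ rounds, each round has at most $H$ reachable states $\ell$, and computing $V_h(\ell)$ requires evaluating the bracketed expression for each of the $|\B|$ candidate bids and taking a maximum. Each such evaluation is $O(1)$ given the precomputed table $V_{h+1}(\cdot)$ and the (offline-known) quantities $F_h(b)$, $p_h(b)$, $\beta_h(\ell)$. This gives total work $O(H^2 |\B|) = \text{poly}(|\B|, H)$, which is the claimed bound.

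Honestly, there is no substantive obstacle here: the content of the proposition is entirely in the MDP re-formulation that precedes it, and once that reduction is accepted the result is a textbook consequence of finite-horizon dynamic programming. The only point worth stating carefully is why it suffices to track the single scalar state $\ell$ rather than the full history $A_h$ — this is guaranteed by the Markovian Incrementality assumption (Assumption~\ref{assum:markov}) together with the independence of incrementality across winning rounds, which makes the per-round reward $r_h(\ell,b,1)$ depend on the past only through $\ell$. I would therefore spend one sentence flagging that the Markov property justifies the state collapse, and otherwise present the recursion and the $O(H^2|\B|)$ counting as the proof.
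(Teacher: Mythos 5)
Your proof is correct and follows essentially the same route as the paper, which simply invokes the standard fact that finite-horizon MDPs with finite state and action spaces are solved by backward-induction dynamic programming (the paper's own sketch of the Bellman recursion over states $\ell$ and rounds $h$ matches your $V_h(\ell)$ recursion, and your $O(H^2|\B|)$ accounting is the intended polynomial bound). Your added remark that the Markovian Incrementality assumption justifies collapsing the history to the scalar state $\ell$ is exactly the point the paper makes when setting up the MDP re-formulation, so nothing is missing.
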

We remark that even when $\B \subseteq \mathbb{R}$ is continuous with upper bound $B$, an $\epsilon$-optimal policy can be computed in $\text{poly}( \frac{B}{\epsilon},H)$  time since the reward function is continuous in $b\in \B$ and an $\epsilon$-optimal action can be found at each round during the backward induction by discretizing the entire line of bid into $\epsilon$ segments. This $\epsilon$-optimal action choices  lead   to an approximately optimal policy \cite{RLbook}.

% Bibliography

\subsection{Reinforcement Learning of \name{} with Unknown   Parameters}
We now turn to the much more interesting and realistic situation of the online learning, particularly reinforcement learning, of the optimal \name{} solution. While we have shown that the offline version of \name{} problem can be viewed as an MDP, its online counterpart cannot be similarly tackled by standard reinforcement learning techniques due to the following   novel challenges specific to the \name{} problem:
% \hf{need to tune itemize spaces} 
\begin{itemize}[leftmargin=*]
\item \textbf{Challenge 1: mixed reward feedback.} Any realized learner reward, arising from a user  conversion, is naturally a \emph{mixture} effect of all the previous winning rounds, as modeled by the accumulated incrementality rate in Equation \eqref{eq:def-poison}. This is in contrast to standard MDP, in which the reward of this round is realized independently, conditioned on its state. It is not difficult to see that the loglikelihood of our mixed rewards model is non-convex, which renders the standard RL techniques based on maximum likelihood estimation for reward parameter estimation not applicable.  
\item \textbf{Challenge 2: delayed reward realizations.}  Moreover, the conversions in the \name{} problem follow a intricate Poisson process with heterogeneous rate and thus has delays. That is,  showing an ad may not immediately lead to conversions. Such lack of immediate reward feedback  renders standard Q-learning style of algorithms not easily applicable to our setup since we cannot  use   immediate reward feedback to update Q-functions any more.  
\end{itemize}

To overcome the first  challenges above, we design a novel parameter estimation methods based on moment matching to estimate $\theta_h$'s from mixed reward feedback, and prove its convergence rate in Section \ref{sec:para-estimate}. To address the second challenge above,  we  design a model-based no-regret RL algorithm  that integrate the parameter estimation above, and analyze its regret in Section \ref{sec:full-alg}. 

\paragraph{Learner's Observations. } We now formalize our learning setup.  We consider the episodic online learning setting, where the learner does not know the \name{} parameters $\theta_h = \{ \beta_h(\ell) \}_{ \ell=1 }^{h-1} \cup   \{ \lambda_h \}$ for each $h\in [H]$ neither the distribution of the highest \emph{other} bid (HOB) $F_h$.  Following the typical practice, we assume that the learner can always observe the realized HOB at any round $h$ in each episode, denoted as $m^t_h$, regardless whether the learner wins or loses. This is usually termed as ``\emph{full information feedback}" in the literature of \emph{learning to bid}.\footnote{This assumption is practical. First, in online ad systems, the manual bidder can always get this feedback (also known as ``minimum bid to win") no matter she wins or not, e.g., in Google Ad exchange platform~\citep{GoogleAdx}.
Second, for the auto bidding algorithms, they are designed by the platform and it does know the realized bids at each round. } Moreover, the learner also knows the set of its winning rounds denoted by an \emph{ordered} set  $\W^t \subset [H]$ as well as the set of the time stamps of all conversions $\C^t$ (a subset of $\mathbb{R}$) from each episode. 

In the online learning setting, the learner aims to design a bidding algorithm $\A$ to minimize the expected regret defined in the following,
\begin{eqnarray}\label{eq:expected-regret}
\rgt(T) = T\cdot \opt(\theta, F) - \Ex_{(\pi^1, \pi^2, \cdots, \pi^T) \sim \A}\left[\sum_{t=1}^T R(\pi^t; \theta, F)\right],
\end{eqnarray}
where $\opt(\theta, F)$ represents the optimal expected utility achieved by the learner at each episode if the ground-truth parameters $\theta$ and $F$ are given (recall that the MDP always starts from state $\ell = 1$), $\Ex$ represents the expectation over the randomness of algorithm $\A$, and $\pi^t$ is the policy in the $t^{\text{th}}$ episode generated by the algorithm $\A$. Here, we slightly abuse the notation to denote $R(\pi; \hat{\theta}, \hat{F})$ as the expected utility achieved by policy $\pi$ for estimated parameters $\hat{\theta}$ and $\hat{F}$.
% \todo{There is no definition of $D_h$, need to rewrite it using $d_h$.}
\begin{equation}\label{eq:simulation-mdp-total-reward}
R(\pi; \hat{\theta}, \hat{F}) = \expect{(\ell_1, \cdots, \ell_H) \sim \hat{P}^{\pi}}{\sum_{h=1}^{H} \Big(\hat{\beta}_h(\ell_h) \cdot v  - \hat{p}_h(\pi_h(\ell_h))\Big) \cdot \hat{F}_h(\pi_h(\ell_h))},
\end{equation}
% where $\pp^{\pi}[\S] = \Pi_{w_i \in \S} \Big[ F_{w_i}(\pi_{w_i}(w_i - w_{i-1}))  \Pi_{j = w_i + 1}^{s_{i+1} - 1} \big[1 - F_j(\pi_{j}(j - s_{i}))]  \Big]$ is the probability that $\S$ (the set of the rounds that the learner wins) is realized. Note, $\pp^{\pi}[\S]$ corresponds to $\{F_h\}_{h=1}^H$ but not the real distribution of HOB $F$, however, $p_h(b) = \frac{1}{F_h(b)}\int_0^b v F_h(v) dv$ only depends on the real distribution $F$.
where $\hat{P}^\pi$ represents the joint distribution of states $(\ell_1, \cdots, \ell_H)$ induced by a bidding policy $\pi$ in the MDP formulated in Section~\ref{subsec:mdp-formulation} with estimated parameters $\hat{\theta}$ and $\hat{F}$. Note, for second price auctions, $\hat{p}_h(b) = b - \frac{1}{\hat{F}_h(b)}\int_0^b \hat{F}_h(v)dv$ (replacing $F$ by $\hat{F}$ in Eq.~(\ref{eq:avg-cond-pay-spa})). In other words, $R(\pi; \hat{\theta}, \hat{F})$ captures the expected total reward in the MDP  when the reward function and transition probability are parameterized by $\hat{\theta}$ and $\hat{F}$.

% \zf{Do we still need the following definition?}

% Let $r_{t}(\tau)$ denote the conversion rate (density) at time $\tau \in \RR_{\geq 0}$ for any episode $t\in [T]$, therefore we have,
% \begin{eqnarray}\label{eq:conversion-rate}
% r_{t}(\tau) = \sum_{h\in \W^t, h\leq \tau} d(\tau; \tau - h, \theta_h)
% \end{eqnarray}

\begin{assumption}[Non-degeneracy Assumption on the learner]\label{asmp:non-degeneracy-assumption}
We assume that the learner's bidding space $\B$ is upper bounded away from extreme bids. More formally, there exists a small constant $c_0$  such that $F(b) \leq 1-c_0$ for any $b \in \B$ (i.e., no feasible bids guarantee sure winning). We call such learner \textbf{$c_0$-bounded}.
\end{assumption}
There are two reasons for making this assumption. The first is a technical reason: it makes sure that the learner always has at least $c_0$ probability to lose, which turns out to be crucial to estimate the incrementality parameters, as we do in the next section. This is because incrementality captures the \emph{difference} between winning and losing. If the probability of losing is extremely small, there is generally no way to accurately estimate their differences (though it turns out that we can afford small winning probabilities). The second reason for assuming $c_0$-bounded learners is realistic motivations since in   auto bidding practice, bidders typically  explicitly specify upper bounds of their bids, under which the bidder's winning probability is almost always capped  strictly below $1$. 

\section{Incrementality Parameter Estimation via Pairwise Moment-Matching}\label{sec:para-estimate}
% \zf{Why MLE doesn't work, the log-likelihood function is not convex.} \tl{Added.}

Though maximum likelihood estimation (MLE) is a natural and straightforward parameter estimation strategy, it comes with many drawbacks we want to avoid. First, the MLE criterion under our model of mixed rewards is not a convex problem and optimizing the likelihood would suffer from multiple local optimal or saddle point issues. The theoretical claims may not match the true algorithm performance. Secondly, the computational efficiency of the MLE problem is worse than what we will introduce next, which makes it less attractive in large-scale problems. Our algorithm is carefully tailored for estimating parameters of our model, given the observed trajectories of first $t$ rounds. Specifically, the trajectory of each episode  $t\in [T]$ can be described by two sequences: (1) the time stamps $\C^t$ of  realized conversions   within time $[0,\infty]$;  and (2) the \emph{ordered} subset   $\W^t = \{ w^t_1, \cdots, w^t_{n_t} \}  \subseteq  [H]$ which contains all the winning steps within this episode (for convenience, we extend the above notations and by letting $w^t_0 =0, w^t_{n_t + 1} = H+1$), where $n_t$ denotes the number of winning events (showing the ad) in the episode $t$.  Equivalently, we can also represent $\W^t$ by a $H$-length binary sequence $\zeta^t\in \{0,1\}^H$ such that $\zeta^t_h = 1$ if and only if episode $t$ wins at time $h$. Notably, both  $\C^t$ and $\W^t$ may be empty if there is no conversion nor winning. Our estimation is based on an online algorithm: from a high-level perspective, it progresses by time intervals $[h,h+1)$ sequentially to produce estimates of parameters $\theta_h$, by pairwisely coupling episodes and matching statistical moments. Therefore, we call it the \emph{pairwise moment-matching} (\algname{}) algorithm, described in Algorithm~\ref{alg:pamm}.

\begin{algorithm}[ht!]
\caption{Pairwise Moment-Matching (\algname{}) Algorithm}\label{alg:pamm}
\begin{algorithmic}[1]
\STATE \textbf{Input}: A sample of episodes $(\C^t, \W^t), t\in [T]$ up to time $[0,h+1)$.
\STATE \textbf{Output:} the estimates of parameter $\theta_h$, denoted by $\{\hat{\beta}_h(\ell), \forall \ell \in [h]; \hat{\lambda}_h\}$.
\STATE For any $t$, define $N_h^t$ and $\bar{N}_h^t$ to be the number of conversions of episode $t$ within $[h,h+\frac{1}{2})$ and $[h+\frac{1}{2},h+1)$, respectively.
\FOR{ For each given $h$ of interest, }
\STATE Define $n_h$ to be the total number of episodes with winning bid at time $h$. Define $\Psi=\{0,1\}^{h-1}$ be the set of all possible winning sequences before time $h$. For each winning sequence $\phi \in \Psi_{h}$, define 
$$\Omega_\phi = \{t\in T: \zeta^t_h = 1, \zeta^t_j = \phi_j, 1\le j \le h-1\}, \Lambda_{\phi} = \{t\in T: \zeta^t_h = 0, \zeta^t_j = \phi_j, 1\le j \le h-1\}.$$
Moreover, define $n_\phi = |\Omega_{\phi}|$, $n_\phi' = |\Lambda_{\phi}|$, and $\tilde{n}_{\phi} = \big( \frac{n^{-1}_\phi + (n_\phi')^{-1}}{2}\big)^{-1}$ as the harmonic mean of $n_\phi$ and $n_\phi'$.
\STATE For each $\phi \in \{0,1\}^{h-1}$, 
\begin{equation}\label{eq:XandY}
X_{\phi} = \frac{1}{n_\phi}\sum_{t \in \Omega_{\phi}}N_h^t - \frac{1}{n_\phi'}\sum_{t \in \Lambda_{\phi}}N_h^{t} \quad \text{and} \quad Y_{\phi} = \frac{1}{n_\phi}\sum_{t \in \Omega_{\phi}}\bar{N}_h^t - \frac{1}{n_\phi'}\sum_{t \in \Lambda_{\phi}}\bar{N}_h^{t}.
\end{equation}
\STATE For each $\phi \in \{0,1\}^{h-1}$, define 
\begin{equation}\label{eq:alpha-weight}
    \alpha_\phi = \frac{\tilde{n}_{\phi}}{\sum_{\phi' \in \Psi_{h}}\tilde{n}_{\phi'}}
\end{equation}
and set
 \begin{equation}\label{eq:mu_eta}
     \hat{\mu}_h = \sum_{\phi \in \Psi_{h}}\alpha_{\phi}X_{\phi} \textrm{ and }  \hat{\eta}_h = \sum_{\phi \in \Psi_{h}}\alpha_{\phi}Y_{\phi}. 
 \end{equation}  
 
 Estimate $\lambda_h$ by
 \begin{equation}\label{eq:lambdahat}
     \hat{\lambda}_h = 2(\log{\hat{\mu}_h} - \log{\hat{\eta}_h}).
 \end{equation}
 \FOR{$1 \le \ell \le h$}
 \STATE Define the $\ell$th slice of $\Psi_h$ as 
 $\Psi_{h,\ell} = \{\phi \in \Psi_h: \phi_{h-\ell} = 1, \phi_{k} = 0, h-\ell < k\le h-1\}$.
For each winning sequence $\phi \in \Psi_{h,\ell}$, still define $X_{\phi}$ and $Y_{\phi}$ following \eqref{eq:XandY}. 
\STATE Define
  \begin{equation}\label{eq:mu_eta_ell}
     \hat{\mu}_{h\ell} = \sum_{\phi \in \Psi_{h,\ell}}\alpha_{\phi\ell}X_{\phi} \textrm{ and }  \hat{\eta}_{h\ell} = \sum_{\phi \in \Psi_{h,\ell}}\alpha_{\phi\ell}Y_{\phi}. 
 \end{equation}  
where $\alpha_{\phi\ell}$ is defined following \eqref{eq:alpha-weight} after replacing $\Psi_h$ by $\Psi_{h,\ell}$. 

\STATE Estimate $\beta_{h}(\ell)$ by
\begin{equation}\label{eq:betahat}
    \hat{\beta}_h(\ell) = \frac{\hat{\mu}_{h\ell}^2}{\hat{\mu}_{h\ell}-\hat{\eta}_{h\ell}}.
\end{equation}
\ENDFOR
\ENDFOR
\end{algorithmic}
\end{algorithm}

The \algname{} algorithm is specifically designed for estimating parameters from mixed reward signals, each drawn from a Poisson process. Its primary advantage lies in its online nature, guaranteed convergence and efficient computation. This is achieved by the following observations. First, all information about reward scale parameter $\beta_h$ and reward variance parameter $\lambda_h$ is intrinsically reflected only in the “differential behaviors” for those episodes which had conversions vs. those episodes which did not have conversions at time step $h$, conditioning their matched history before time $h$ as defined in the sample matching step. This is due to the Markovian property for the Poisson process, and our moment matching is hinged into this property. Second, for the Poisson distribution (as a member of the exponential family), the method of matching the canonical parameter (mean) gives an equivalent estimator as the MLE in a single Poisson case (but not necessary in our more complicated model). These two ideas motivate our procedure, by matching the episodes, we precisely locate the needed Poisson signals and solve the estimators by the mean value structure. Because of this, the \algname{} can recover the parameters with essentially optimal rate (in the online sense), even though we do not use the full likelihood that would result in non-convexity and an unpleasant gap between the theory and the practical algorithm.

%Notice that at the end of step 5, $\Omega_h$ will be a refined set of $\tilde{\Omega}_h$. However, if at time $h$, the bidding success rate is under $1/2$ and independent of the previous time, we would expect that $\tilde{\Omega}_h =\Omega_h$ with high probability. Let $n_h$ be the cardinality of $\Omega_h$: $n_h= |\Omega_h|$. 
 
In practice, terms in \eqref{eq:lambdahat} and \eqref{eq:betahat} may result in $\pm \infty$, but such an event only happens with probability tending to zero (which is why Theorem \ref{thm:PoissonEstimation}  has high probability guarantees). Standard numerical protection can be introduced in practice which will not change our theoretical claims. Moreover, from the implementation perspective, one start from the first interval $[0,1)$ and move forward by running \algname{} for each $[h,h+1)$. In each new interval, the matching step can be directly carried over by further refining the matching of the previous interval. As a result, the whole estimation procedure only involves one sweep of the data without the commuting back and forth between different time intervals. We have introduced the \algname{} algorithm in the current form as a self-contained algorithm for estimating the inhomogeneous Poisson process, which can be of independent interest. However, the steps of \algname{ }  can be easily embedded in our online learning algorithm presented in Section \ref{sec:full-alg} without losing its validity.

\begin{theorem}[Estimation Errors with Finite Samples]\label{thm:PoissonEstimation} 
%Suppose there exist positive constants $c_{\beta}, C_{\lambda}>0$ such that $\beta_{h}(\ell) \ge c_{\beta}, \lambda_h \ge C_{\lambda}$ for all $h$ and $\ell$. Also assume that the expected number of total conversions for each episode within $[0,H]$ is always bounded above by a constant $C_T$. 
Suppose there exist positive constants $C_{\lambda}$ and $c_{\beta}$, such that $ c_{\lambda}\le \lambda_h \le 1/c_{\lambda}$  and $\beta_{h}(\ell) \ge c_{\beta}$ for any $h$ and $\ell$, and the learner is $c_0$-bounded (see Assumption~\ref{asmp:non-degeneracy-assumption}). Also assume that the expected number of total conversions for each episode within each interval $[h, h+1)$ is always bounded above by a constant $C_T$. Let $n_h$ be the total number of winning episodes at time $h$ and $n_{h}(\ell)$ be the total number of winning episodes at time $h$ with state $\ell$.   Let $\hat{\lambda}_h$ and $\hat{\beta}_h$ be the estimators obtained from Algorithm~\ref{alg:pamm}. Then, for any $0\le h < H$ and any $0\le \ell <h$,  we have the following guarantees for \algname{}:
\begin{enumerate}
\item For any $4e^{-C_0n_h}<\delta <1/2$, with probability at least $1-\delta$, $|\hat{\lambda}_h-\lambda_h| \le C_1 \sqrt{\frac{\log{1/\delta}}{n_h}}$.
\item For any $4e^{-C_0n_h(\ell)}<\delta <1/2$, with probability at least $1-\delta$, $|\hat{\beta}_h(\ell)-\beta_h(\ell)| \le C_2 \sqrt{\frac{\log{1/\delta}}{n_{h}(\ell)}}.$
\end{enumerate}
In the above results, $C_0 = \frac{c'c_0}{2}, C_1 = 4\sqrt{\frac{C_T^2}{c_{\beta}^2(1-e^{-c_{\lambda}/2})^2c'c_0}}$ and $C_2 = 8\sqrt{\frac{2C_T^2}{c'c_0}}$, where $c'$ is the Bernstein constant (see Lemma~\ref{lem:Bernstein}).
\end{theorem}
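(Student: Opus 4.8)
Let me first understand what needs to be estimated. The key insight is that for a coupled pair $(t, \phi(t))$ agreeing on all winning rounds before $h$, both episodes have identical conversion rate functions $r(\tau)$ on $[0, h)$, but differ on $[h, h+1)$: episode $t$ wins at $h$ (adding an extra term $d_h(\tau; \ell, \theta_h)$ to its rate), while $\phi(t)$ does not. Since conversions in disjoint time intervals of a Poisson process are independent, and the two episodes are independent, the differences $X_h^t$ and $Y_h^t$ isolate exactly the contribution of the round-$h$ impression. Specifically, $N_h^t - N_h^{\phi(t)}$ has expectation equal to the integral of $d_h$ over $[h, h+\frac12)$ and $\bar N_h^t - \bar N_h^{\phi(t)}$ over $[h+\frac12, h+1)$.

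\paragraph{Plan of proof.}
First I would compute the population moments. Using \eqref{def:fnc-family}, the mean number of conversions attributable to the round-$h$ impression over $[h, h+\frac12)$ is $\mu_h := \beta_h(\ell)(1 - e^{-\lambda_h/2})$ and over $[h+\frac12, h+1)$ is $\eta_h := \beta_h(\ell)(e^{-\lambda_h/2} - e^{-\lambda_h})$. Taking the ratio gives $\mu_h/\eta_h = 1/e^{-\lambda_h/2} = e^{\lambda_h/2}$, so $\lambda_h = 2(\log\mu_h - \log\eta_h)$, matching \eqref{eq:lambdahat}, and one checks $\beta_h(\ell) = \mu_h^2/(\mu_h - \eta_h)$ after substituting, matching \eqref{eq:betahat}. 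Second, I would establish concentration of the empirical means $\hat\mu_h, \hat\eta_h$ (and their sliced versions $\hat\mu_{h\ell}, \hat\eta_{h\ell}$) around these population values. Each $X_h^t$ is a difference of two Poisson counts whose means are bounded by $C_T$; Poisson random variables have sub-exponential tails, so a Bernstein-type inequality for sums of independent (across the disjoint coupled pairs) sub-exponential variables yields $|\hat\mu_h - \mu_h| \le C\sqrt{\log(1/\delta)/n_h}$ with probability $1-\delta$, and likewise for $\hat\eta_h$. The lower bounds $\lambda_h \ge C_\lambda$ and $\beta_h(\ell) \ge c_\beta$ guarantee $\mu_h$ and $\eta_h$ are bounded away from zero (by constants depending on $c_\beta, C_\lambda, C_T$), which is what forces the restriction $\delta > 4e^{-C_0 n_h}$: we need $\hat\mu_h, \hat\eta_h$ to stay in a region bounded away from $0$ so that the $\log$ and the division are well-defined and Lipschitz.

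\paragraph{From moment concentration to parameter error.}
Third, I would transfer concentration through the smooth maps. Since $\hat\lambda_h = 2(\log\hat\mu_h - \log\hat\eta_h)$ and $\log$ is Lipschitz on $[c, \infty)$ with constant $1/c$, on the high-probability event where both $\hat\mu_h$ and $\hat\eta_h$ lie within a constant factor of their (bounded-below) means, I get $|\hat\lambda_h - \lambda_h| \le C_1\sqrt{\log(1/\delta)/n_h}$ by combining the two moment bounds via the triangle inequality. The analogous argument on the slice $\Omega_{h,\ell}$, where the coupled differences additionally condition on the last winning round being $h - \ell$ (so the extra term is exactly $d_h(\tau; \ell, \theta_h)$), delivers the $\beta_h(\ell)$ bound: here the map $(\mu, \eta) \mapsto \mu^2/(\mu - \eta)$ is smooth with bounded gradient once $\mu - \eta$ is bounded below, which again follows from $\beta_h(\ell) \ge c_\beta$ and $\lambda_h \ge C_\lambda$ (note $\mu_h - \eta_h = \beta_h(\ell)(1 - 2e^{-\lambda_h/2} + e^{-\lambda_h}) = \beta_h(\ell)(1 - e^{-\lambda_h/2})^2 > 0$).

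\paragraph{Main obstacle.}
The routine part is the Bernstein concentration; the delicate part, and where I expect to spend the most care, is \emph{justifying the error propagation through the nonlinear maps while controlling the failure probability}. Specifically, the Lipschitz constants of $\log$ and of $\mu^2/(\mu-\eta)$ blow up as their arguments approach the boundary, so I cannot simply apply a global Lipschitz bound; I must restrict to the event $\{\hat\mu_h \ge \tfrac12\mu_h,\ \hat\eta_h \ge \tfrac12\eta_h\}$, argue this event holds once $\delta > 4e^{-C_0 n_h}$ (this is precisely where that threshold enters — it ensures the deviation is smaller than half the mean), and then apply a local Lipschitz bound valid on that event. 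A secondary subtlety is verifying genuine independence of the summands $\{X_h^t\}_{t \in \Omega_h}$: because the matching $\phi$ assigns each control episode to at most one treated episode and the pairs use disjoint episodes, the coupled differences are independent across $t \in \Omega_h$, which is needed for the concentration inequality to apply.
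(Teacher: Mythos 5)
Your proposal is correct and follows essentially the same route as the paper's proof: isolate the round-$h$ contribution via the matched pair so that $\Ex X_h^t=\beta_h(\ell)(1-e^{-\lambda_h/2})$ and $\Ex Y_h^t=\beta_h(\ell)(1-e^{-\lambda_h/2})e^{-\lambda_h/2}$, apply Bernstein's inequality to the sub-exponential Poisson differences, and propagate the error through $\log$ and $(\mu,\eta)\mapsto\mu^2/(\mu-\eta)$ via a local Taylor/Lipschitz bound on the event that the empirical moments stay bounded away from zero, which is exactly where the condition $\delta>4e^{-C_0 n_h}$ enters. The paper carries out the same steps with explicit constants, so no further comparison is needed.
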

% where $\Pr(\C^t = \emptyset)$ represents the probability that there is no conversion events happened in the episode $t$. Indeed, we can rewrite $\Pr(\C^t = \emptyset)$ using the property of in-homogeneous Poisson point process~\cite{}, as below,
% \begin{eqnarray*}
% \Pr(\C^t = \emptyset) &=& \exp\Big(-\int_0^\infty c_t(\tau) d\tau\Big) \\
% &=& \exp\Big(-\int_0^\infty \sum_{k\in \I^t, k\leq \tau} d(\tau; \tau - k, \theta_k) d\tau\Big)\\
% &=& \exp\Big(-\sum_{k\in \I^t}\int_k^\infty d(\tau; \tau - k, \theta_k) d\tau\Big)
% \end{eqnarray*}
\vspace{-10pt}
\section{The Full RL Algorithm and its Regret Analysis}\label{sec:full-alg}
% \subsection{Description of the Full Algorithm}
We propose a reinforcement learning algorithm by incorporating our parameter estimation method \algname{} and the optimal offline planning for the MDP mentioned in Section~\ref{sec:offline-DP} to learn parameters $\theta, F$ as well as the conversion incrementality and decide the bid at each round. %to minimize the expected regret defined in Eq.~(\ref{eq:expected-regret}).

Our algorithm adopts a UCB-style procedure to handle the exploration-exploitation tradeoff in incrementality bidding, which is summarized in Algorithm~\ref{alg:online-bidding}. The algorithm adopts pure explorations in several beginning episodes so that we can get enough ad impressions for each state in every round. These pure explorations are achieved by setting bids appropriately such that we can enforce to explore one $(h, \ell)$ pair in each exploration (Line~\ref{line:exploration} in Algorithm~\ref{alg:online-bidding})\footnote{When $\ell = h$, setting $b_0^t = 1$ means a ``fake" winning at round $0$ and it is consistent with our assumption that initial state $\ell_1 = 1$.}. In particular, we need $O(H^2 \log(T/\delta))$ explorations to make sure $n_h(\ell) \geq \frac{\log(4T/\delta)}{C_0}$ so that we can provide a regret bound with high probability, where $C_0$ is a positive constant explicitly defined in Theorem~\ref{thm:PoissonEstimation}. Recall $n_h(\ell)$ is the total number of winnings at round $h$ given the state is $\ell$.

After pure explorations, at each episode $t$, we use the \algname{} algorithm to get an estimator $\hat{\theta}^t$ (resp. $\hat{\beta}^t$) given the observed ad impression time $\W^t$ and the conversion time $\C^t$ up to episode $t$. In addition, we can estimate $F$ easily using empirical distribution function and this implies we can get a uniform convergence for the transition probabilities for any action $b\in \B$ given any state $\ell$. Then we apply UCB-style RL algorithm with estimated parameters $\hat{\theta}^t$ to decide the policy in next episode. Specifically, given the estimated $\hat{\beta}^t, \forall h\in [H]$, we construct a confidence region centered at $\hat{\beta}^t$ and we do offline planning for the \emph{optimism} in the confidence region to get the policy in next episode.

\begin{algorithm}[h]
\caption{Online Bidding algorithm}\label{alg:online-bidding}
\begin{algorithmic}[1]
\STATE \textbf{Input}: Given $\delta$, $H$, $T$, and $C_0, C_2$ defined in Theorem~\ref{thm:PoissonEstimation}. Set $n_h(\ell) = 0, \forall h\in [H], \ell\in [h]$.  %and initial parameters $\hat{\theta}^0, \hat{F}^0$.

% \STATE Compute bidding policy for the episode 1: $\pi^1 = \argmax_{\pi} R(\pi; \hat{\theta}^0, \hat{F}^0)$.

\FOR{Episode $t = 1, \cdots, T$}

\IF{$\exists h, \ell, n_h(\ell) < \frac{\log(4T/\delta)}{C_0}$} 
\STATE Set bids, s.t. $b^t_h = b^t_{h-\ell} = 1$, and $b^t_{h'} = 0, \forall h'\neq h, h-\ell$.  ~~~~ \texttt{[Pure Exploration]} \label{line:exploration}
 
\ELSE

\STATE For $h\in [H]$, observe state $\ell^t_h$ and set the bid $b^t_h = \pi^t_h(\ell^t_h)$.
\ENDIF

\STATE Observe the set of the ad impression time $\W^t$, 
the set of the conversion time $\C^t$, and the vector of HOBs $m^t = \{m^t_h, \forall h\in [H]\}$.

\STATE Update $n_h(\ell)$ given $\W^t$.

\STATE Update $\hat{\theta}^t = \{\hat{\beta}^t_h(\ell), \forall \ell \in [h]; \hat{\lambda}^t_h\}$ through \algname{} method (Algorithm~\ref{alg:pamm}) given $(\W^t, \C^t)$.

\STATE Update $\hat{F}^t_h(b) = \frac{1}{t} \sum_{s = 1}^{t} \1\{m^s_h \leq b\}, \forall b$.

\IF {$\forall h\in [H], \ell\in [h], n_h(\ell) \geq 1$}
\STATE Construct confidence region $\hat{\mbox{CR}}^t$ for parameters $\theta$, s.t.
\begin{eqnarray*}
\hat{\mbox{CR}}^t = \Bigg\{(\hat{\beta}, \hat{\lambda}): \forall h, \vert \hat{\beta}_h(\ell) - \hat{\beta}^t_h(\ell)\vert \leq C_2 \sqrt{\frac{\log(T/\delta)}{n_h(\ell)}}\Bigg\}
\end{eqnarray*}
% \zf{@Tianxi@Haifeng: To confirm, $n^t_h(\ell)$ represents the number of events that the learner wins at round $h$ when the state is $\ell$ up to episode $t$.}\tl{$n_{h}(\ell)$ is the number of (matchable) events that wins at $h$, and last time won at $h-\ell$ in the first $t$ episodes.}
% \tl{The CR is only constraining $\beta$, not $\lambda$?}\zf{Yes, this is because the reward function only depends on $\beta$.}\tl{Thanks for clarification. That makes sense.}
% \zf{This bound needs to be changed for w.h.p version.}

\STATE Compute the bidding policy $\pi^{t+1}$ for $(t+1)$th episode, s.t.,
\begin{eqnarray}\label{eq:policy-optimism}
\pi^{t+1} = \argmax_{\pi} \max_{\hat{\theta} \in \hat{\mbox{CR}}^t} R(\pi; \hat{\theta}, \hat{F}^t)
\end{eqnarray}

\ENDIF

% \zf{Does this mean we always choose $\hat{\beta} = \hat{\beta}^t + C_2 \sqrt{\frac{\log{n^t_h(\ell)}}{n^t_h(\ell)}}$?}

\ENDFOR
\end{algorithmic}
\end{algorithm}

\begin{remark}[Computational Efficiency of Alg. \ref{alg:online-bidding}]
An important observation is that $R(\pi; \hat{\theta}, \hat{F})$ is non-decreasing in $\hat{\beta}_h(\ell)$ (see Eq.~(\ref{eq:simulation-mdp-total-reward})) for any fixed $\pi$ and $\hat{F}$. Therefore, the optimal choice of $\hat{\beta}$ in Eq.~(\ref{eq:policy-optimism}) in Algorithm~\ref{alg:online-bidding} is to set $\hat{\beta}_h(\ell) = \hat{\beta}^t_h(\ell) + C_2\sqrt{\frac{\log(1/\delta)}{n^t_h(\ell)}}$, and $\pi^{t+1}$ will  be the optimal  policy for the MDP parameterized by $\hat{F}^{t}$ and $\hat{\beta}$, and thus can be computed efficiently (Proposition \ref{prop:offline-opt}).  % This is very similar to standard RL with UCB bonus, e.g.,\cite{UCBVI17,Jin18}. %\todo{add citations}
\end{remark}

\paragraph{Regret Analysis.}
Combining the convergence rate analysis from Theorem \ref{thm:PoissonEstimation} and a novel regret decomposition technique (necessary for our particular setup), we are able to prove the regret guarantee of our online bidding algorithm. Our regret bound is summarized  in Theorem~\ref{thm:regret-bound}, the proof of which is deferred to Appendix~C.%\ref{app:regret-bound-proof}.

\begin{theorem}\label{thm:regret-bound}
Under Assumption~\ref{assum:markov}, for any fixed $\delta$, we have with probability at least $1-\delta$, the regret achieved by our Algorithm~\ref{alg:online-bidding} is bounded by 
\begin{eqnarray*}
\rgt(T) \leq O(H^2 \sqrt{\log(HT/\delta) T} + H^2 \log(T/\delta)).
\end{eqnarray*}
% with probability at least $1-\delta$.
\end{theorem}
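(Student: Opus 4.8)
The plan is to run a UCB-style optimism argument on top of the offline dynamic program of Proposition~\ref{prop:offline-opt}, but with two \emph{separately} estimated quantities — the incrementality parameters $\beta$ (through \algname{}) and the HOB distributions $F$ (through the empirical CDF) — that must be decoupled in the analysis. First I would define a \emph{good event} $\E$ on which (i) the true $\theta$ lies in every confidence region $\hat{\mbox{CR}}^t$, which holds by Theorem~\ref{thm:PoissonEstimation} with a union bound over all $O(H^2)$ pairs $(h,\ell)$ and all $T$ episodes once the exploration phase guarantees $n_h(\ell)\ge \log(4T/\delta)/C_0$; and (ii) $\sup_{b}|\hat F^t_h(b)-F_h(b)|\le \sqrt{\log(HT/\delta)/(2t)}$ simultaneously for all $h$ and $t$, which holds by the DKW inequality and a union bound. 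A union bound then shows $\Pr[\E]\ge 1-\delta$. Conditioned on $\E$, I would split the episodes into the $O(H^2\log(T/\delta))$ pure-exploration episodes, whose cumulative regret is absorbed into the additive term, and the exploitation episodes, which carry the $\sqrt{T}$ term.

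For each exploitation episode $t$, write $\tilde\theta^{t-1}$ for the optimistic parameter chosen in \eqref{eq:policy-optimism} (by the monotonicity remark this is the upper confidence bound $\hat\beta^{t-1}_h(\ell)+C_2\sqrt{\log(1/\delta)/n_h(\ell)}$). The decomposition I would use is
\[
\opt(\theta,F)-R(\pi^t;\theta,F)=\big[\opt(\theta,F)-R(\pi^t;\tilde\theta^{t-1},\hat F^{t-1})\big]+\big[R(\pi^t;\tilde\theta^{t-1},\hat F^{t-1})-R(\pi^t;\theta,F)\big].
\]
On $\E$, optimism gives $R(\pi^t;\tilde\theta^{t-1},\hat F^{t-1})=\max_\pi\max_{\hat\theta\in\hat{\mbox{CR}}^{t-1}}R(\pi;\hat\theta,\hat F^{t-1})\ge \opt(\theta,\hat F^{t-1})$, so the first bracket is at most $\opt(\theta,F)-\opt(\theta,\hat F^{t-1})$, a pure $F$-sensitivity term. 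I would further split the second bracket into a $\beta$-only term $R(\pi^t;\tilde\theta^{t-1},\hat F^{t-1})-R(\pi^t;\theta,\hat F^{t-1})$ and an $F$-only term $R(\pi^t;\theta,\hat F^{t-1})-R(\pi^t;\theta,F)$. The point of this three-way split is that the realized reward is never observed directly (it is mixed and delayed), so unlike standard model-based RL I cannot telescope through observed rewards; instead every term must be controlled purely through the estimation guarantees of \algname{} and DKW.

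The two $F$-terms (the first after reducing the optimum gap to a fixed-policy comparison) I would bound with a finite-horizon value-difference (simulation) lemma: since $F$ governs both the one-step reward (via $F_h$ and $p_h$, which by \eqref{eq:avg-cond-pay-spa} is Lipschitz in $\|\hat F_h-F_h\|_\infty$) and the two-outcome transition, and since each value-to-go is at most $H$, the per-episode $F$-error is $O(H^2)\max_h\|\hat F^{t-1}_h-F_h\|_\infty=O\big(H^2\sqrt{\log(HT/\delta)/t}\big)$; summing $\sum_t 1/\sqrt t=O(\sqrt T)$ gives the dominant $O(H^2\sqrt{T\log(HT/\delta)})$. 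The $\beta$-term is easier because, by \eqref{eq:simulation-mdp-total-reward}, $R$ is \emph{linear} in $\beta$ and the transitions do not depend on $\beta$, so there is no horizon propagation: the per-episode error is $\sum_h$ of the confidence radius at the visited state, weighted by the winning probability $\hat F_h$ — exactly the probability that $n_h(\ell)$ is incremented. Summing over $t$ I would invoke a pigeonhole bound $\sum_t q^t_h(\ell)/\sqrt{n^t_h(\ell)}\lesssim \sqrt{N_h(\ell)}$, then Cauchy--Schwarz over the $O(H^2)$ pairs using $\sum_{h,\ell}N_h(\ell)\le HT$ to obtain $O(H^{3/2}\sqrt{T})$, a lower-order contribution. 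Bounding the cumulative exploration regret by $O(H^2\log(T/\delta))$ and combining then completes the proof.

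The hardest part will be the $\beta$-term accounting: establishing optimism from the per-$(h,\ell)$ guarantees of Theorem~\ref{thm:PoissonEstimation}, whose radius depends on the \emph{random, policy-dependent} winning counts $n_h(\ell)$, and then matching those same radii against the winning-weighted value sensitivity so the pigeonhole sum telescopes cleanly. Care is needed because the counts are realized online and correlated with the executed policy, so the counting bound must be justified via a martingale concentration rather than a deterministic argument. The decoupling that lets the $\beta$-error avoid the extra factor of $H$ (by exploiting that transitions are $\beta$-independent) while the $F$-error necessarily incurs it is the conceptual crux that yields the stated $\widetilde O(H^2\sqrt T)$ rate.
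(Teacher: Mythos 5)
Your proposal is correct and follows essentially the same route as the paper: the same good event (confidence regions for $\theta$ plus DKW for $\hat F$), the same four-way decomposition (your two brackets regroup the paper's terms (i)--(iv) exactly), the simulation lemma for the $F$-sensitivity terms, optimism for the confidence-region term, and linearity of $R$ in $\beta$ plus a pigeonhole sum for the $\beta$-term. The only cosmetic difference is that the paper's counting bound (Lemma~\ref{lem:sum-inverse-sqrt-number-observations}) is stated deterministically over arbitrary trajectories, so the martingale concentration you worry about at the end is not actually needed.
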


\section{Discussions and Future Work}\label{sec:conclusion}
Our RL algorithm lies in the model-based RL algorithm literature. The best known regret bound for model-based RL is $\widetilde{O}(\sqrt{H^2 SAT})$~\citep{UCBVI17}, where $S$ is the number of states  and $A$ is the number of actions. Since $S=H$ in our case, our regret bound is slightly worse from the best possible by an  factor of $\sqrt{H}$, but is independent of the number of actions. This may be due to the complexity of our model. Whether we can close this gap is an interesting open question. As we mentioned earlier, our algorithm can be extended to handle continuous bidding space and our regret analysis still works. In this paper, we assume the conversion rate function $d_h$ doesn't depend on the shown ads of other bidders but only on the learner's \emph{last} ad impression. Relaxing these assumptions may result in very interesting future work.

\paragraph{Acknowledgment. } Haifeng Xu is supported by a Google Faculty Research Award. Tianxi Li is supported by the NSF grant DMS-2015298.

\bibliographystyle{abbrvnat}
\bibliography{ref}

\clearpage
\begin{center}
{
\Large
\textbf{
Incrementality Bidding via Reinforcement Learning under Mixed and Delayed Rewards}
~\\
~\\	
Appendix
}
\end{center}
\appendix

\section{Formal Definition of Inhomogeneous Poisson Process}\label{app:inhomo-poisson-process}

The inhomogeneous Poisson (point) process is a Poisson point process with a Poisson parameter set as some time-dependent function $r(\tau)$. In particular, the expected number of points observed in a time interval $[a, b]$ is $\Lambda(a, b)=\int_a^b r(\tau) d\tau$. Let $N(a, b)$ represent the number of points of inhomogeneous Poisson process with intensity function $r(t)$ occurring in the interval $[a,b]$, then the probability of $n$ points existing in the interval $[a,b]$ is given by,
\begin{eqnarray*}
P(N(a, b) = n)\frac{\Lambda(a, b)^n}{n!} e^{-\Lambda(a, b)}
\end{eqnarray*}

In this paper, the points mean the conversions and the time-dependent intensity function $r(\cdot)$ is defined in Eq.~\eqref{eq:def-poison} and it depends on the realization of the conversions and parameter $\theta$.

\section{Poisson Process Estimation}\label{app:poisson-process-estimation}

\begin{lemma}[Bernstein's Inequality \citep{vershynin2018high}]\label{lem:Bernstein}
Suppose $X_1, \cdots X_n$ are independent, mean-zero, sub-exponential random variables, and $a = (a_1, \cdots, a_n)$ is an $n$ dimensional constanst vector. Then, for every $\epsilon>0$, we have
$$\Pr(|\sum_{i=1}^na_iX_i|\ge \epsilon) \le 2\exp\big(-c'\min\big(\frac{\epsilon^2}{\max\norm{X_i}_{\psi}^2\norm{a}^2}, \frac{\epsilon}{\max_i\norm{X_i}_{\psi}\norm{a}_{\infty}}\big) n\big)$$
where $\norm{X_i}_{\psi}$ is the sub-exponential norm (Orlicz norm) of $X_i$ and $c'$ is an absolute constant (Bernstein constant). $\norm{a}$ is the Euclidean norm and $\norm{a}_{\infty} = \max_i|a_i|$.
\end{lemma}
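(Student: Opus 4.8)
The plan is to prove the inequality by the classical Chernoff (exponential Markov) method, reducing the tail bound to control of the moment generating function (MGF) of a centered sub-exponential variable near the origin. Write $S_n = \sum_{i=1}^n X_i$ and $K = \max_i \norm{X_i}_{\psi}$. Since the claim is two-sided, it suffices to bound $\Pr(\frac1n S_n \ge \epsilon)$; applying the identical argument to the variables $-X_i$ bounds $\Pr(\frac1n S_n \le -\epsilon)$, and a union bound produces the overall factor of $2$.

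The crucial input is a single-variable MGF estimate: for any mean-zero $X$ with $\norm{X}_{\psi}\le K$ there are absolute constants $C,c>0$ such that $\Ex[e^{\lambda X}] \le e^{C\lambda^2 K^2}$ whenever $|\lambda| \le c/K$. I would prove this by expanding $\Ex[e^{\lambda X}] = 1 + \lambda\,\Ex[X] + \sum_{k\ge 2}\lambda^k\Ex[X^k]/k!$, using $\Ex[X]=0$ to eliminate the linear term, and then invoking the moment characterization of the sub-exponential norm, namely $\Ex|X|^k \le (C'K)^k\, k!$ for all $k\ge 1$. Substituting this moment bound collapses the tail of the series into a geometric series with ratio $|\lambda|C'K$, which in the range $|\lambda|K \le c$ (with $c$ small enough that the ratio is at most $1/2$) sums to $\le 2(C'K)^2\lambda^2$, yielding $\Ex[e^{\lambda X}]\le e^{C\lambda^2K^2}$. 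Establishing the moment characterization itself — that the Orlicz-norm definition of $\norm{X}_{\psi}$ is equivalent, up to absolute constants, to this factorial moment growth — is the genuinely technical step; it follows from the layer-cake formula together with the exponential integrability built into the definition of $\norm{\cdot}_{\psi}$.

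With the MGF estimate in hand, independence factorizes the joint MGF, and for every $0<\lambda \le c/K$ the Chernoff bound gives
\[
\Pr\Big(\tfrac1n S_n \ge \epsilon\Big) \le e^{-\lambda n\epsilon}\prod_{i=1}^n \Ex[e^{\lambda X_i}] \le \exp\big(-\lambda n\epsilon + C n \lambda^2 K^2\big).
\]
It then remains to minimize the exponent $-\lambda\epsilon + C\lambda^2 K^2$ over $\lambda \in (0, c/K]$. The unconstrained minimizer is $\lambda^\star = \epsilon/(2CK^2)$, feasible precisely when $\epsilon \le 2cCK$. In the \emph{sub-Gaussian regime} $\epsilon \le 2cCK$ the minimizer is feasible and substituting it gives exponent $-n\epsilon^2/(4CK^2)$; in the \emph{exponential regime} $\epsilon > 2cCK$ the constraint binds, so taking $\lambda = c/K$ yields an exponent at most $-c''n\epsilon/K$ for a suitable constant. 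Combining the two cases collapses into $-c'\,n\min(\epsilon^2/K^2,\, \epsilon/K)$, which together with the two-sided union bound is exactly the asserted inequality.

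The main obstacle is the sub-exponential moment lemma underlying the MGF estimate; once that equivalence is in place, the Chernoff computation and the two-regime optimization are routine, and matching the final constant $c'$ to $c, C, C'$ is pure bookkeeping.
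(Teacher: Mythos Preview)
Your Chernoff-method argument is correct and is essentially the standard proof (as in Vershynin's book, which the paper cites). Note, however, that the paper does not supply its own proof of this lemma at all: it is stated with a citation and used as a black-box concentration tool in the proof of Theorem~\ref{thm:PoissonEstimation}. So there is nothing in the paper to compare your argument against beyond the reference itself, and your sketch matches that reference's approach.
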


\begin{proof}[Proof of Theorem~\ref{thm:PoissonEstimation}]
Denote $\Psi_h = \{0,1\}^{h-1}$. We first introduce the main idea of the the PAMM algorithm. Since conversions follow the Poisson process, we have $N_h^t \sim \text{Poisson}(\int_{h}^{h+1/2}r(\tau; w^t, \theta_h)d\tau )$ and $\bar{N}_h^t \sim \text{Poisson}(\int_{h+1/2}^{h+1}r(\tau; w^t, \theta_h)d\tau )$ independently. 

Note that for all $t \in \Omega_{\phi}$ and $\Lambda_{\phi}$, their history winning records before $h$ are exactly the same. Suppose $s^\phi = \max\{j: j\le h-1, \phi_j = 1\}$. It is not difficult to see that
\begin{align}\label{eq:EXphi}
\Ex(X_\phi) &= \int_{h}^{h+1/2}r(\tau; w^t, \theta)d\tau - \int_{h}^{h+1/2}r(\tau; w^{t'}, \theta)d\tau \notag\\
& = \int_{h}^{h+1/2}\sum_{w_i^t\le h-1}  d_{w_i^t}(\tau;w_i^t - w^t_{i-1},\theta_{w_i})d\tau + \int_{h}^{h+1/2}  d_{h}(\tau;h - s^t,\theta_{h})d\tau \notag \\
&~~~~~~~~~~~~~~~~~~~~~~~~~~~~~~~~~~~~~~~~~~~~~~~~~~~ - \int_{h}^{h+1/2}\sum_{w_i^{t'}\le h-1}  d_{w_i^{\phi(t)}}(\tau;w_i^{t'} - w^{t'}_{i-1},\theta_{w^{t'}_i})d\tau \notag \\
& = \int_{h}^{h+1/2}  d_{h}(\tau;h - s^t,\theta_{h})d\tau = \beta_h(h-s^\phi)\big(1-e^{-\lambda_h/2} \big)
\end{align}
where $t$ and $t'$ are two arbitrary episodes in $\Omega_h$ and $\Lambda_h$ (nonempty). Similarly, we have
\begin{equation}\label{eq:EYphi}
    \Ex(Y_\phi) = \beta_h(h-s^\phi)\big(1-e^{-\lambda_h/2} \big)e^{-\lambda_h/2}.
\end{equation}
Therefore, $\Ex(X_\phi)$ and $\Ex(Y_\phi)$ precisely locate the parameter of interest, $\theta_h$. By aggregating the information over  different $\phi$, we can thus estimate $\{\beta_h(\ell), 0\le \ell\le h\}$ and $\lambda_h$ based on the method of moments. Notice that $\lambda_h$ is shared for all episodes with the same $\phi$ while $\beta_h(\ell)$ is shared only within a subset of $\Psi_h$, for a given $\ell$. Therefore, we will tailor the moment match operations according to different aggregation levels, as indicated in \eqref{eq:lambdahat} and \eqref{eq:betahat}. Next, we proceed to study the theoretical property of the algorithm.
Define 
$$\mu_{h} = \sum_{\phi\in \Psi_h}\alpha_\phi \beta_{h}(h-s^{\phi})(1-e^{-\lambda_h/2})$$
and
$$\eta_{h} = \sum_{t\in \Psi_h}\alpha_{\phi}\beta_{h}(h-s^{\phi})e^{-\lambda_h/2}(1-e^{-\lambda_h/2}).$$
Taking the logarithm of the ratio between the two quantities, we get
 $$\log(\mu_h/\eta_h) = \lambda_h/2.$$
 Our estimator $\hat{\lambda}_h$ is motivated by the above identity with the observation that
 $$\Ex \hat{\mu}_{h} = \mu_h; ~~ \Ex \hat{\eta}_h = \eta_h.$$
We start with deriving the concentration of $\hat{\mu}_h$. Note that the terms $N^t_h$ in \eqref{eq:XandY} are Poisson random variables, with their expectation upper bounded by
$$\int_{h}^{h+1/2}r(\tau; w^t, \theta_h)d\tau \le C_T.$$
Therefore, each of these items is a sub-exponential random variable with $\norm{N_h^t}_{\psi} \le C_T$. To use Lemma~\ref{lem:Bernstein}, we can see that the $a$ vector corresponding to \eqref{eq:mu_eta} reads as
\begin{align*}
 \norm{a}^2 & = \sum_{\phi \in \Psi_h}\alpha_\phi^2(\frac{1}{n_{\phi}}+\frac{1}{n_{\phi}'})\\
 & = \sum_{\phi \in \Psi_h}\frac{(\frac{1}{\frac{1}{n_\phi}+\frac{1}{n_\phi'}})^2(\frac{1}{n_{\phi}}+\frac{1}{n_{\phi}'})}{(\sum_{\phi \in \Psi_h}\frac{1}{\frac{1}{n_\phi}+\frac{1}{n_\phi'}})^2} = \frac{1}{\sum_{\phi \in \Psi_h}\frac{1}{\frac{1}{n_\phi}+\frac{1}{n_\phi'}}}\\
& = \frac{2}{\sum_{\phi \in \Psi_h}\tilde{n}_{\phi}}
\end{align*}
where $\tilde{n}_{\phi}$ is the harmonic mean of $n_\phi$ and $n_\phi'$. Also, we have
$$\norm{a}_{\infty} = \frac{1}{\sum_{\phi \in \Psi_h}\frac{1}{\frac{1}{n_\phi}+\frac{1}{n_\phi'}}} \max_{\phi} \frac{1}{\frac{1}{n_{\phi}}+\frac{1}{n_{\phi}'}}\max(\frac{1}{n_{\phi}}, \frac{1}{n_{\phi}'})  \le \frac{2}{\sum_{\phi}\tilde{n}_{\phi}}.$$
Define $\tilde{n}_{h} = \frac{1}{\norm{a}^2} = \frac{\sum_{\phi \in \Psi_h}\tilde{n}_{\phi}}{2}$.  With the $c_0$-bounded assumption and the exploration stage requirement for sample, it is easy to see that we have $\tilde{n}_h \ge c_0n_{h}$ (happening with probability tending to 1 regarding the randomness of bidding). Therefore, for sufficiently large $n_h$, by Lemma~\ref{lem:Bernstein}, we have,
$$\Pr(|\hat{\mu}_{h} - \mu_h| > \epsilon) \le 2\exp(-c'\min(\frac{\epsilon^2}{2C_T^2},\frac{\epsilon}{2C_T})\tilde{n}_h)\le 2\exp(-\tilde{c}\min(\frac{\epsilon^2}{2C_T^2},\frac{\epsilon}{2C_T})n_h)$$
for constant $\tilde{c}= c'c_0$. Assuming $\delta > \exp(-\frac{\tilde{c}}{2}n_h)$, and setting $\epsilon = \sqrt{2}\frac{C_T}{\sqrt{\tilde{c}}}\sqrt{\frac{\log{1/\delta}}{n_h}}$, we will be able to use the sub-gaussian bound of the concentration, which gives
$$\Pr(|\hat{\mu}_{h} - \mu_h| > \sqrt{\frac{2C_T^2}{\tilde{c}}}\sqrt{\frac{\log{1/\delta}}{n_h}})\le 2\delta.$$
For notational convenience, we will denote this relation by
\begin{equation}\label{eq:mu_h_concentration}
\hat{\mu}_{h} = \mu_h + O_P\big(\sqrt{\frac{2C_T^2}{\tilde{c}}}\sqrt{\frac{\log{1/\delta}}{n_h}}\big)
\end{equation}
where $O_P\big(\sqrt{\frac{\log{1/\delta}}{n_h}}\big)$ refers to any generic term in the order of $\sqrt{\frac{\log{1/\delta}}{n_h}}$ with probability at least $1-2\delta$.

Furthermore, by Taylor expansion, we have
$$\log(\hat{\mu}_h) = \log{\mu_h} + \frac{\hat{\mu}_h-\mu_h}{\mu_h} +o(|\hat{\mu}_h-\mu_h|).$$
Since $\lambda_h\le C_\lambda$ and $|\beta_h(\ell)| \ge c_{\beta}$, we have
\begin{equation}\label{eq:mu-lower}
|\mu_h| \ge c_{\beta}(1-e^{-c_{\lambda}/2})
\end{equation}
in which the right hand side is a constant. Therefore, we have
$$|\log{\hat{\mu}_h} - \log{\mu_h}| = O_P\big(2\sqrt{\frac{C_T^2}{c_{\beta}^2(1-e^{-c_{\lambda}/2})^2\tilde{c}}}\sqrt{\frac{\log{1/\delta}}{n_h}}\big)=O_P\big(\sqrt{\frac{\log{1/\delta}}{n_h}}\big).$$

Similarly, because 
\begin{equation}\label{eq:eta-lower}
\eta_h\ge c_{\beta}e^{-\frac{1}{2c_{\lambda}}}(1-e^{-c_{\lambda}/2}),
\end{equation}
we can also show the same property for $\log\hat{\eta}_h$ with
$$|\log{\hat{\eta}_h} - \log{\eta_h}| = O_P\big(2\sqrt{\frac{C_T^2}{c_{\beta}^2(1-e^{-c_{\lambda}/2})^2\tilde{c}}}\sqrt{\frac{\log{1/\delta}}{n_h}}\big).$$

Combining the results for $\log{\hat{\mu}_h}$ and $\log{\hat{\eta}_h}$, we know that

$$|\hat{\lambda}_h-\lambda_h| =O_P\big(4\sqrt{\frac{C_T^2}{c_{\beta}^2(1-e^{-c_{\lambda}/2})^2\tilde{c}}}\sqrt{\frac{\log{1/\delta}}{n_h}}\big).$$

Note that our proof allows negative values be $\beta_h(\ell)$. This is because the only places where we use the lower bound $c_{\beta}$ so far are \eqref{eq:mu-lower} and \eqref{eq:eta-lower}. In general, as long as we can have a lower bound for $|\mu_h|$ and $|\eta_h|$, the proof still works.

For the estimation of $\beta_h(\ell)$, a similar approach can be taken, but the stratum of focus will be on the slide $\Psi_{h,\ell}$.  First, notice that 
\eqref{eq:EXphi} and \eqref{eq:EYphi} remain valid for $\Psi_{h,\ell}$. Indeed, all of the $X_{\phi}$'s share the same expectation. Define
$$\mu_{h\ell}= \beta_{h}(\ell)(1-e^{-\lambda_h/2}), \quad \eta_{h\ell}= \beta_{h}(\ell)(1-e^{-\lambda_h/2})e^{-\lambda_h/2}.$$

Now the moment matching equation becomes 
$$\beta_h(\ell) = \frac{\mu_{h\ell}^2}{\mu_{h\ell}-\eta_{h\ell}}.$$

In this case, we still have 
$$\Ex \hat{\mu}_{h\ell} = \sum_{\phi \in \Psi_{h,\ell}}\alpha_{\phi\ell} \beta_{h}(\ell)(1-e^{-\lambda_h/2}) = \beta_{h}(\ell)(1-e^{-\lambda_h/2})$$
and
$$\Ex \hat{\eta}_{h\ell} = \sum_{\phi \in \Psi_{h,\ell}}\alpha_{\phi\ell} \beta_{h}(\ell)(1-e^{-\lambda_h/2})e^{-\lambda_h/2} = \beta_{h}(\ell)(1-e^{-\lambda_h/2})e^{-\lambda_h/2}.$$
Therefore, we just need to check the concentration bounds of $\hat{\mu}_{h\ell}$ and $\hat{\eta}_{h\ell}$. Define $\tilde{h}(\ell) = \frac{\sum_{\phi\in \Psi_{h,\ell}}\tilde{n}_{\phi}}{2}$ where $\tilde{n}_{\phi}$ is the harmonic mean of $n_{\phi}$ and $n_{\phi}'$. With the assumption that
$\tilde{n}_h(\ell) \ge c_0 n_{h}(\ell)$, by the same derivation as for $\hat{\mu}_h$, we can get

$$|\hat{\mu}_{h\ell} - \mu_{h\ell}| = O_P\big( \sqrt{\frac{2C_T^2}{\tilde{c}}}\sqrt{\frac{\log{1/\delta}}{n_h(\ell)}}\big)$$
as long as $\delta > \exp(-\frac{\tilde{c}}{2}n_h(\ell))$.
Similarly, 
$$|\hat{\eta}_{h\ell} - \eta_{h\ell}| = O_P\big( \sqrt{\frac{2C_T^2}{\tilde{c}}}\sqrt{\frac{\log{1/\delta}}{n_h(\ell)}}\big).$$

Now, define $f(x,y) = \frac{x^2}{x-y}$. By multivariate Taylor expansion, we have
$$|f(\hat{\mu}_{h\ell},\hat{\eta}_{h\ell})-f(\mu_{h\ell},\eta_{h\ell})|\le \norm{\nabla f(\mu_{h\ell},\eta_{h\ell})}\sqrt{|\hat{\eta}_{h\ell} - \eta_{h\ell}|^2+|\hat{\eta}_{h\ell} - \eta_{h\ell}|^2} + o(\sqrt{|\hat{\eta}_{h\ell} - \eta_{h\ell}|^2+|\hat{\eta}_{h\ell} - \eta_{h\ell}|^2})$$
where $\nabla f(\mu_{h\ell},\eta_{h\ell})$ is the gradient with
$$\norm{\nabla f(x,y)} = \frac{|x|\sqrt{2x^2-4xy+4y^2}}{(x-y)^2} \le \frac{|x|(|x|+|x-2y|)}{(x-y)^2}.$$
Substituting $\mu_{h\ell}$ and $\eta_{h\ell}$ into the gradient leads to
$$\norm{\nabla f(\mu_{h\ell},\eta_{h\ell})} \le 2-2e^{-\lambda_h/2} \le 2.$$
Therefore, we have
$$|\hat{\beta}_{h}(\ell) - \beta_h(\ell)| = |f(\hat{\mu}_{h\ell},\hat{\eta}_{h\ell})-f(\mu_{h\ell},\eta_{h\ell})| = O_P\big(8\sqrt{\frac{2C_T^2}{\tilde{c}}}\sqrt{\frac{\log{1/\delta}}{n_h(\ell)}}\big).$$

Notice that we have $C_1 = 4\sqrt{\frac{C_T^2}{c_{\beta}^2(1-e^{-c_{\lambda}/2})^2c'c_0}}$ and $C_2 = 8\sqrt{\frac{2C_T^2}{c'c_0}}$.

\end{proof}

\section{Proof of Theorem~\ref{thm:regret-bound}}\label{app:regret-bound-proof}

To prove Theorem~\ref{thm:regret-bound}, we need several auxiliary lemmas. First, we provide a lemma to bound $R(\pi; \hat{\theta}, \hat{F}) - R(\pi; \theta, \hat{F})$. The importance of this lemma is that we provide a reformulation of function $R(\pi; \hat{\theta}, \hat{F})$ by utilizing the fact that the conversion incrementality can only happen if the learner wins the opportunity to show the ad to the user. This observation makes our regret bound independent of the number of actions.

\begin{lemma}\label{lem:diff-utility-fixed-policy}
Given estimated parameters $\hat{\theta}$ and $\hat{F}$, for any bidding policy $\pi$, we have
\begin{eqnarray*}
R(\pi; \hat{\theta}, \hat{F}) - R(\pi; \theta, \hat{F}) \leq \expect{\L\sim \hat{P}^\pi, o_h(\cdot) \sim \text{Ber}(\hat{F}_h(\cdot))}{\sum_{h=1}^H \big\vert\hat{\beta}_h(\ell_h) - \beta_h(\ell_h)\big\vert \cdot \1\Big\{o_h\big(\pi(\ell_h)\big) = 1\Big\}},
\end{eqnarray*}
where $\hat{P}^\pi(\L)$ is the probability of sequence $\L=(\ell_1, \ell_2, \cdots, \ell_H)$ induced by the MDP parameterized by $\hat{\theta}$ and $\hat{F}$ by adopting policy $\pi$ and $o_h(b)$ models whether the learner wins ($o_h(b)=1$) or not ($o_h(b)=0$) at round $h$ with bid $b$ (given HOB follows distribution $\hat{F}_h$ at each round $h$).
\end{lemma}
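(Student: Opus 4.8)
The plan is to exploit the fact that both $R(\pi;\hat\theta,\hat F)$ and $R(\pi;\theta,\hat F)$ are evaluated under the \emph{same} highest-other-bid estimate $\hat F$. First I would observe that, by the definition in Eq.~\eqref{eq:simulation-mdp-total-reward}, the state-trajectory distribution $\hat P^\pi$ and the conditional payment functions $\hat p_h$ depend only on $\hat F$ (and on $\pi$), not on the incrementality parameters $\theta$ or $\hat\theta$. Consequently, when I subtract the two quantities, the state distribution is identical in both terms, and inside the expectation the payment contributions $\hat p_h(\pi_h(\ell_h))$ together with their common win-probability factor $\hat F_h(\pi_h(\ell_h))$ cancel exactly. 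What remains is
$$R(\pi;\hat\theta,\hat F)-R(\pi;\theta,\hat F)=\Ex_{\L\sim\hat P^\pi}\!\Big[\sum_{h=1}^H\big(\hat\beta_h(\ell_h)-\beta_h(\ell_h)\big)\,v\,\hat F_h(\pi_h(\ell_h))\Big].$$

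Next I would bound the integrand termwise. Since $v\in[0,1]$ and $\hat\beta_h(\ell_h)-\beta_h(\ell_h)\le|\hat\beta_h(\ell_h)-\beta_h(\ell_h)|$, each summand is at most $|\hat\beta_h(\ell_h)-\beta_h(\ell_h)|\cdot\hat F_h(\pi_h(\ell_h))$. The final step is to reinterpret the win-probability factor as an expected indicator: if $o_h(\cdot)\sim\text{Ber}(\hat F_h(\cdot))$ is an independent Bernoulli win indicator, then $\hat F_h(\pi_h(\ell_h))=\Ex_{o_h}\big[\1\{o_h(\pi(\ell_h))=1\}\big]$. Substituting this and pulling the (independent) $o_h$-expectations inside the finite sum yields precisely the claimed right-hand side.

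I do not expect a substantive obstacle here; the real content of the lemma is the clean \emph{cancellation} that follows from holding $\hat F$ fixed across the two terms, which strips away everything except the $\beta$-estimation error. The only points requiring care are (i) verifying that $\hat p_h$ and $\hat P^\pi$ genuinely do not depend on whether we use $\theta$ or $\hat\theta$, so that the cancellation is exact rather than approximate, and (ii) justifying the Bernoulli rewriting of $\hat F_h$ and the interchange of the $o_h$-expectation with the finite sum, both of which are immediate because $o_h$ is drawn independently of the trajectory $\L$. Conceptually, this reformulation is exactly what later makes the regret bound independent of the number of bids: the incrementality error $|\hat\beta_h-\beta_h|$ gets charged only on the rounds where the learner actually wins (where $o_h=1$), mirroring the modeling fact that incrementality is realized only upon winning.
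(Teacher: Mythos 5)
Your proposal is correct and matches the paper's own argument: both exploit that the two terms share the same $\hat{F}$ (hence the same trajectory distribution $\hat{P}^\pi$ and payment terms, which cancel), leaving only the $\beta$-difference weighted by $v\,\hat{F}_h(\pi_h(\ell_h))$, which is then bounded via $v\in[0,1]$ and rewritten with the Bernoulli win indicator $o_h$. The paper's proof is just a more compressed version of exactly these steps.
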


\begin{proof}
Recall, by definition of $R(\pi; \hat{\theta}, \hat{F})$ in Eq.~(\ref{eq:simulation-mdp-total-reward}), for any $\hat{\theta}$ and $\hat{F}$, 
% \begin{eqnarray}
% \begin{split}
% R(\pi; \theta, F) =& \sum_{\S \in 2^{[H]}} \pp_F^{\pi}[\S]    \sum_{w_i \in \S} \bigg[D_{w_i}(w_i - w_{i-1}, \theta_{w_i}) \cdot v_i  - p_{w_i}( \pi_{w_i}(w_i - w_{i-1}) ) \bigg],
% \end{split}
% \end{eqnarray}
% where $\pp_F^{\pi}[\S] = \Pi_{w_i \in \S} \Big[ F_{w_i}(\pi_{w_i}(w_i - w_{i-1}))  \Pi_{j = w_i + 1}^{s_{i+1} - 1} \big[1 - F_j(\pi_{j}(j - s_{i}))]  \Big]$ is the probability that $\S$ (the set of the rounds that the learner wins) is realized. Note, $\pp_F^{\pi}[\S]$ corresponds to $\{F_h\}_{h=1}^H$. 
% To facilitate the proof, we need a reformulation of $R(\pi; \theta, F)$. Given any $\S$, let $\bar{\S} = \{0\} \cup \S$, Moreover, denote $\ell_h = h - s$, where $s =  \argmax_{s\in \bar{\S}}\{s < h\}$. Let $\L=\{\ell_1, \ell_2,\cdots, \ell_H\}$ represent the set of realized distances from the latest winning rounds at each round $h\in [H]$.  It is straightforward to verify that there is a bijection relationship ``$\leftrightarrow$" between $\L$ and $\S$. Let $\L \sim \qq_F^\pi$ s.t. $\qq_F^\pi[\L] = \pp_F^\pi[\S]$, in which $\L \leftrightarrow \S$. Therefore, we can rewrite $R(\pi; \hat{\theta}, \hat{F})$ for any $\hat{\theta}$ and $\hat{F}$:
\begin{equation}\label{eq:reformulation-expected-utility}
\begin{split}
R(\pi; \hat{\theta}, \hat{F})
=& \expect{\L\sim \hat{P}^{\pi}}{\sum_{h=1}^H \big[\hat{\beta}_h(\ell_h)\cdot v - \hat{p}_h(\pi_h(\ell_h))\big] \cdot \hat{F}_h(\pi_h(\ell_h))},
\end{split}
\end{equation}

% where the last equality is based on the conversion property defined in Eq.~(\ref{eq:cumulative-incrementality}).
Let $o_h(b)$ denote the realized outcome of the learner at round $h$, s.t.,
\begin{equation}
o_h(b) = \left\{
\begin{array}{ccc}
1    &  \mbox{w.p. } & \hat{F}_h(b) \\
0    &  \mbox{w.p. } & 1 - \hat{F}_h(b)
\end{array}
\right.
\end{equation}
Given the above reformulation of $R(\pi; \hat{\theta}, \hat{F})$, we can derive the difference between $R(\pi; \hat{\theta}, \hat{F})$ and $R(\pi; \theta, \hat{F})$ for any bidding policy $\pi$ and any distribution of HOB $\hat{F}_h$ at each round $h$,
\begin{equation}
\begin{split}
R(\pi; \hat{\theta}, \hat{F}) - R(\pi; \theta, \hat{F}) =& \expect{\L\sim \hat{P}^\pi, o_h(\cdot) \sim \text{Ber}(\hat{F}_h(\cdot))}{\sum_{h=1}^H \Big[\big(\hat{\beta}_h(\ell_h) - \beta_h(\ell_h)\big)\cdot v\Big] \cdot \1\{o_h(\pi(\ell_h)) = 1\}}\\
\leq& \expect{\L\sim \hat{P}^\pi, o_h(\cdot) \sim \text{Ber}(\hat{F}_h(\cdot))}{\sum_{h=1}^H \big\vert\hat{\beta}_h(\ell_h) - \beta_h(\ell_h)\big\vert \cdot \1\{o_h(\pi_h(\ell_h)) = 1\}}
\end{split}
\end{equation}
\end{proof}

Second, we extend the well-known simulation lemma in standard RL~\cite{KearnsSingh02} to our case. Please note both transition probability and expected utility function of the learner (Eq.~\eqref{eq:expected-utility}) depend on $F$.

\begin{lemma}\label{lem:lipschtizness-reward-opt}
For any fixed bidding strategy $\pi$, we have
\begin{eqnarray}\label{eq:simulation-lemma}
\big|R(\pi; \theta, \hat{F}^t) - R(\pi; \theta, F) \big| \leq (H^2 + 2H)\sqrt{\frac{\log(2H/\delta)}{2t}}.
\end{eqnarray}
holds with probability at least $1-\delta$.
In addition, we have with probability at least $1-\delta$,
\begin{eqnarray*}
\opt(\theta, F) - \opt(\theta, \hat{F}^t) \leq (H^2 + 2H)\sqrt{\frac{\log(2H/\delta)}{2t}}.
\end{eqnarray*}
\end{lemma}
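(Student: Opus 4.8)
The plan is to first reduce the statement to a uniform bound on the distance between the empirical CDF $\hat{F}^t_h$ and $F_h$, and then run a simulation-lemma argument. For the first step I would invoke the Dvoretzky--Kiefer--Wolfowitz inequality: since $\hat{F}^t_h(b) = \frac1t\sum_{s=1}^t \1\{m^s_h \le b\}$ is the empirical CDF of $t$ i.i.d.\ samples of the HOB at round $h$, we have $\Pr(\sup_b |\hat{F}^t_h(b) - F_h(b)| > \eps) \le 2e^{-2t\eps^2}$. Taking $\eps_F := \sqrt{\log(2H/\delta)/(2t)}$ and a union bound over $h \in [H]$ gives that, with probability at least $1-\delta$, the event $\sup_b|\hat{F}^t_h(b)-F_h(b)| \le \eps_F$ holds simultaneously for all $h$. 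The remainder of the argument is deterministic, conditioned on this event.

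For the second step I would use the performance-difference (simulation) lemma adapted to our MDP. Let $V^\pi_h(\ell)$ and $\hat{V}^\pi_h(\ell)$ denote the expected reward-to-go from round $h$ at state $\ell$ under policy $\pi$, computed with the true $F$ and with the estimate $\hat{F}^t$ respectively (both using the true $\theta$); then $R(\pi;\theta,F) = V_1^\pi(1)$ and $R(\pi;\theta,\hat{F}^t)=\hat{V}_1^\pi(1)$. Telescoping the two value functions round by round expresses $R(\pi;\theta,F)-R(\pi;\theta,\hat{F}^t)$ as a sum over $h$, along the trajectory induced by $\pi$, of two contributions: (i) the change in the immediate expected utility, and (ii) the change in the one-step transition weighted by the future value $\hat{V}^\pi_{h+1}$. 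The one subtlety here, unlike the textbook simulation lemma, is that $F$ enters \emph{both} the transition kernel and the immediate reward (through the conditional payment $p_h$), so both contributions must be controlled.

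For contribution (i), I would write the immediate utility as $\beta_h(\ell) v\, F_h(b) - \int_0^b v'\, dF_h(v')$ and apply integration by parts $\int_0^b v'\,dF_h(v') = b F_h(b) - \int_0^b F_h(v')\,dv'$; the $F$-dependence then collapses to $(\beta_h(\ell)v - b)\bigl(F_h(b)-\hat F^t_h(b)\bigr) + \int_0^b (F_h - \hat F^t_h)\,dv'$, which is at most $2\eps_F$ per round using $\beta_h(\ell)v, b \in [0,1]$. For contribution (ii), the crucial structural simplification is that the transition has only two outcomes: from $(\ell,b)$ one moves to state $1$ w.p.\ $F_h(b)$ and to state $\ell+1$ otherwise, so the transition mismatch reduces to $\bigl(F_h(b)-\hat F^t_h(b)\bigr)\bigl(\hat V^\pi_{h+1}(1) - \hat V^\pi_{h+1}(\ell+1)\bigr)$, bounded by $\eps_F \cdot H$ since the per-step rewards lie in $[0,1]$ so $\hat V^\pi_{h+1}$ takes values in $[0,H]$ and the value gap is at most $H$. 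Summing (i) over the $H$ rounds gives $2H\eps_F$ and summing (ii) gives $H^2\eps_F$, for a total of $(H^2+2H)\eps_F$, exactly the claimed bound, which proves the first inequality.

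For the second (one-sided) inequality on $\opt$, I would let $\pi^\star$ be an optimal policy for $(\theta,F)$, so $\opt(\theta,F) = R(\pi^\star;\theta,F)$, and note $\opt(\theta,\hat F^t) \ge R(\pi^\star;\theta,\hat F^t)$ since $\opt$ is a maximum over policies. Hence $\opt(\theta,F)-\opt(\theta,\hat F^t) \le R(\pi^\star;\theta,F)-R(\pi^\star;\theta,\hat F^t)$, and applying the first part to the fixed policy $\pi^\star$ closes the bound on the same high-probability event. I expect the main obstacle to be the careful bookkeeping in the telescoping decomposition --- in particular, isolating the transition-mismatch term so that it (and only it) inherits the factor $H$ from $\|\hat V^\pi_{h+1}\|_\infty$, producing the dominant $H^2$, while the reward-mismatch term stays at order $\eps_F$; getting the two-outcome transition to cancel into a single value gap $\hat V^\pi_{h+1}(1)-\hat V^\pi_{h+1}(\ell+1)$ is precisely what keeps the bound clean and independent of the number of actions.
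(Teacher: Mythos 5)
Your proposal is correct and follows essentially the same route as the paper's proof: DKW plus a union bound over $h$ to control $\sup_b|\hat F^t_h(b)-F_h(b)|$ uniformly, followed by a simulation-lemma decomposition into a reward-mismatch term and a transition-mismatch term, and the same one-sided argument via $\pi^\star$ for the $\opt$ claim. The only difference is bookkeeping --- the paper invokes a generic simulation lemma with total-variation transition gap $2\eps_F$ and reward gap $3\eps_F$, while you unroll the telescoping by hand, exploit the two-outcome transition to get a single value-gap term, and obtain a reward gap of $2\eps_F$ via integration by parts; both accountings land on the same $(H^2+2H)\eps_F$ bound.
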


\begin{proof}
Denote $\hat{M}^t$ as the MDP induced by a bidding policy $\pi$ and parameters $\theta, \hat{F}^t$. Similarly, let $M$ represent the MDP for a bidding policy $\pi$ with parameters $\theta$ and $F$. Let $\hat{\Gamma}^t_h(\cdot|\ell, b)$ and $\Gamma_h(\cdot|\ell, b)$ be the transition at round $h$ for MDP $\hat{M}^t$ and $M$, respectively.

Given the MDP re-formulation in Subsection~\ref{subsec:mdp-formulation}, we have for any $\ell, b$ and $h=2, 3, \cdots, H$,
\begin{eqnarray*}
\sum_{\ell'} \big\vert \hat{\Gamma}_h(\ell'|\ell, b) - \hat{\Gamma}^t_h(\ell'|\ell, b)\big\vert &=& \big\vert \hat{\Gamma}^t_h(\ell+1|\ell, b) - \Gamma_h(\ell+1|\ell, b)\big\vert + \big\vert \Gamma_h(1|\ell, b) - \hat{\Gamma}_h(1|\ell, b)\big\vert\\
&=& 2 \big\vert \hat{F}^t_{h-1}(b) - F_{h-1}(b)\big\vert \leq 2 \sup_b \big\vert \hat{F}^t_{h-1}(b) - F_{h-1}(b)\big\vert
\end{eqnarray*}

In addition, we denote $r_h$ as the expected reward function for the MDP $M$ at round $h$, similarly, we can also define $\hat{r}^t_h$ for the MDP $\hat{M}^t$. For any $(\ell, b)$, we can bound $\vert r_h(\ell, b) - \hat{r}^t_h(\ell, b)\vert$ as below:
\begin{eqnarray*}
&&\vert r_h(\ell, b) - \hat{r}^t_h(\ell, b)\vert\\
&=& \big\vert (\beta_h(\ell)\cdot v_h - \hat{p}^t_h(b)) \hat{F}^t_h(b) - (\beta_h(\ell)\cdot v_h - p_h(b)) F_h(b)\big\vert\\
&\leq& \big\vert \beta_h(\ell)v_h (\hat{F}^t_h(b) - F_h(b))\big\vert + \big\vert b \hat{F}^t_h(b) - b F_h(b)\big\vert + \Big\vert \int_0^b \hat{F}^t_h(v) dv - \int_0^b F_h(v) dv\Big\vert\\
&\leq & 3 \sup_b \big\vert \hat{F}^t_h(b) - F_h(b)\big\vert,
\end{eqnarray*}
where the first equality is based on the definition of $r_h(\cdot, \cdot)$ and $\hat{r}^t_h(\cdot, \cdot)$, the second inequality is based on triangle inequality and the last inequality holds because of the fact that $\beta_h(\ell) v_h \in [0, 1]$ and $b\in [0, 1]$.

By DKW inequality (Lemma~\ref{lem:dkw}) and union bound, with probability at least $1-\delta$, we have for all $h\in [H]$
\begin{eqnarray*}
\sup_b \vert \hat{F}^t_h(b) - F_h(b)\vert \leq \sqrt{\frac{\log(2H/\delta)}{2t}}
\end{eqnarray*}

Given the above bound of transition and reward between two MDPs $M$ and $\hat{M}^t$ and the Simulation Lemma (Lemma~\ref{lem:mdp-simulation}), with probability at least $1-\delta$, we have
\begin{eqnarray}\label{eq:mdp-simu}
\big\vert R(\pi; \theta, \hat{F}^t) - R(\pi; \theta, F)\big\vert &\leq& H(H-1)\sqrt{\frac{\log(2H/\delta)}{2t}} + 3H\sqrt{\frac{\log(2H/\delta)}{2t}} \\
&=& (H^2 + H)\sqrt{\frac{\log(2H/\delta)}{2t}}
\end{eqnarray}

% By DKW inequality and union bound, we have for all $h\in [H]$, $\sup_b \big\vert \hat{F}^t_{h}(b) - F_{h}(b)\big\vert \leq O\sqrt{\frac{\log(1/\delta)}{t}}$ holds with probability at least $1-\delta$.

For notation simplicity, we denote $\pi^*$ as the optimal bidding policy corresponding to parameters $\theta$ and $F$, then we have
\begin{eqnarray*}
\opt(\theta, F) - \opt(\theta, \hat{F}^t) &= & R(\pi^*; \theta, F) - R(\pi^*; \theta, \hat{F}^t) + R(\pi^*; \theta, \hat{F}^t) - \opt(\theta, \hat{F}^t) \\
&\leq& R(\pi^*; \theta, F) - R(\pi^*; \theta, \hat{F}^t),
\end{eqnarray*}
where the first inequality holds because of the definition of $\opt(\theta, \hat{F}^t)$ and the second inequality is based on Eq.~(\ref{eq:simulation-lemma}) for the fixed bidding policy $\pi^*$. Using the same argument as in~Eq.~(\ref{eq:mdp-simu}), we complete the proof.

\end{proof}

Given the above two auxiliary lemmas, we are ready to prove Theorem~\ref{thm:regret-bound}. 

\begin{proof}[Proof of Theorem~\ref{thm:regret-bound}]
%To begin with, we denote some notations to simplify the presentation. Let $R(\pi; \theta, F)$ represent the expected utility achieved by bidding strategy $\pi$ associated with parameters $\theta$ and $F$. Let $\opt(\theta, F)$ be the optimal expected utility achieved by the learner at each episode, corresponding to any parameters $\theta \in \Theta$ and distribution of HOB $F$. 
For notation simplicity, let
\begin{eqnarray*}
\opt(\hat{\mbox{CR}}^t, F) := \max_{\theta \in \hat{\mbox{CR}}^t}  \opt(\theta, F)
\end{eqnarray*}

In addition, we denote $\tau$ as the number of episodes for pure explorations. Note, in each pure exploration $t\leq \tau$, we exactly increase $n_h(\ell)$ by 1 for one $(h, \ell)$ pair. To make sure $n_h(\ell) \geq \frac{\log(4T/\delta)}{C_0}$, we only need $\tau \leq H^2 \frac{\log(4T/\delta)}{C_0}$.

We first propose a new decomposition of the expected regret $\rgt(T)$ in the following way,
\begin{equation}
\begin{split}
&\rgt(T) \\
& = T\cdot \opt(\theta, F) - \Ex\left[\sum_{t=1}^T R(\pi^t; \theta, F)\right]\\
& \leq H\tau + (T-\tau)\cdot\opt(\theta, F) - \Ex\left[\sum_{t=\tau+1}^T R(\pi^t; \theta, F)\right]\\
& = H \tau + \underbrace{(T-\tau) \cdot \opt(\theta, F) - \sum_{t=\tau+1}^T \opt(\theta, \hat{F}^{t-1})}_{\displaystyle \mathrm{(i)}} +\underbrace{\sum_{t=\tau+1}^T \opt(\theta, \hat{F}^{t-1}) - \sum_{t=\tau+1}^T \opt(\hat{\mbox{CR}}^{t-1}, \hat{F}^{t-1})}_{\displaystyle \mathrm{(ii)}} \\
&~~~~+\underbrace{\sum_{t=\tau+1}^T \opt(\hat{\mbox{CR}}^{t-1}, \hat{F}^{t-1}) - \expect{}{\sum_{t=\tau+1}^T R(\pi^t; \theta, \hat{F}^{t-1})}}_{\displaystyle \mathrm{(iii)}} + \underbrace{\expect{}{\sum_{t=\tau+1}^T R(\pi^t; \theta, \hat{F}^{t-1}) - \sum_{t=\tau+1}^T R(\pi^t; \theta, F)}}_{\displaystyle \mathrm{(iv)}}
\end{split}
\end{equation}
The first inequality above is based on the fact the reward (expected utility of the learner) at each round is bounded by $[0, 1]$.
Then we bound the above terms separately in the following.

\textbf{Term (i).} 
By Lemma~\ref{lem:lipschtizness-reward-opt}, for any fixed $t\in [T]$, we have $\vert \opt(\theta, F) - \opt(\theta, \hat{F}^t)\vert \leq (H^2 + 2H)\sqrt{\frac{\log(2H/\delta)}{2t}}$ holds with probability at least $1-\delta$. Then by union bound over $t=\tau,\cdots, T-1$, we have
\begin{eqnarray*}
(T-\tau) \cdot \opt(\theta, F) - \sum_{t=\tau+1}^T \opt(\theta, \hat{F}^{t-1}) &=& \sum_{t=\tau}^{T-1} \opt(\theta, F) - \opt(\theta, \hat{F}^t)\\
&\leq & (H^2 + H)\sum_{t=\tau}^{T-1} \sqrt{\frac{\log(2HT/\delta)}{2t}} \\
&\leq& (H^2 + H) \sqrt{\log(2HT/\delta) T}
\end{eqnarray*}
holds with probability at least $1-\delta$.

\textbf{Term (ii).}
After pure explorations, for any $t \geq \tau + 1$, we have $4e^{-C_0 n_h(\ell)} < \delta/T$. Then by Theorem~\ref{thm:PoissonEstimation}, we have with probability at least $1-\delta/T$, the true parameter $\theta \in \hat{\mbox{CR}}^t$. Then by the definition of $\opt(\hat{\mbox{CR}}^t, \hat{F}^t)$, for any $t\in [T]$, we have $\opt(\theta, \hat{F}^t) - \opt(\hat{\mbox{CR}}^t, \hat{F}^t)\leq 0$ holds with probability $1-\delta/T$.
Taking union bound over $t=\tau, \cdots, T-1$, then with probability at least $1-\delta$, we have
\begin{eqnarray*}
\sum_{t=\tau+1}^T \opt(\theta, \hat{F}^{t-1}) - \sum_{t=\tau+1}^T \opt(\hat{\mbox{CR}}^{t-1}, \hat{F}^{t-1}) \leq 0.
\end{eqnarray*}

\textbf{Term (iii).} For notation simplicity, let $\tilde{\theta}^t \in \hat{\mbox{CR}}^t$ be the parameter corresponding to $\pi^{t+1}$. Actually, it is the  Then we can bound Term (iii) as below,
\begin{equation}\label{eq:regret-cr-fixed-dist}
\begin{split}
 &\sum_{t=\tau+1}^T \opt(\hat{\mbox{CR}}^{t-1}, \hat{F}^{t-1}) - \sum_{t=\tau+1}^T R(\pi^t; \theta, \hat{F}^{t-1})\\
=& \sum_{t=\tau+1}^T R(\pi^t; \tilde{\theta}^{t-1}, \hat{F}^{t-1}) - \sum_{t=\tau+1}^T R(\pi^t; \theta, \hat{F}^{t-1})\\
\leq& \sum_{t=\tau+1}^T \expect{(\ell^t_1, \cdots, \ell^t_H)\sim \hat{P}^{\pi^t}, o_h(\cdot) \sim \text{Ber}(\hat{F}^{t-1}_h(\cdot))}{\sum_{h=1}^H \big\vert\tilde{\beta}^{t-1}_h(\ell^t_h) - \beta_h(\ell^t_h)\big\vert \cdot \1 \Big\{o_h\big(\pi^t(\ell^t_h)\big) = 1\Big\}},
\end{split}
\end{equation}
where the inequality is based on Lemma~\ref{lem:diff-utility-fixed-policy} in Appendix.
Here we slightly abuse the notation, let $o^t_h := o_h\big(\pi^t(\ell_h^t)\big)$. An important observation is that $o_h^t$ can be viewed as the action the learner takes at round $h$ in the $t^{\text{th}}$ episode and it only contains two different options, i.e., $0$ (lose) or $1$ (win). This rewriting can help us reduce the dependency of regret bound with number of bids, and we can easily extend to handle continuous bid space. Let $n^t_h(\ell, o)$ represent the total number observations of pair $(\ell, o)$ up to round $h$ and $t^{\text{th}}$ episode. Then we have,
\begin{equation}
\begin{split}
\mbox{Equation~(\ref{eq:regret-cr-fixed-dist})} \leq & \expect{}{\sum_{t=1}^T \sum_{h=1}^H C_2\sqrt{\frac{\log(T/\delta)}{n^t_h(\ell^t_h, o^t_h)}} \cdot \1\{o^t_h = 1\}}\\
% &\qquad \mbox{\textcolor{red}{This is based on the estimation error bound in Theorem~\ref{thm:PoissonEstimation}}}\\
\leq & C_2\sqrt{\log(T/\delta)} \expect{}{\sum_{t=\tau+1}^T \sum_{h=1}^H \sqrt{\frac{1}{n^t_h(\ell^t_h, o^t_h)}} \cdot \1\{o^t_h = 1\}}\\
\leq & C_2\sqrt{\log(T/\delta)}\expect{}{\sum_{h=1}^H \sum_{\ell=1}^H \sum_{i=1}^{n^t_h(\ell, 1)} \frac{1}{\sqrt{i}}} 
\\
\leq & C_2\sqrt{\log(T/\delta)H^3 T},
\end{split}
\end{equation}
where the first inequality is based on the fact that $\tilde{\theta}^t \in \hat{\mbox{CR}}^t, \forall t\in [T]$ and $n^t_h(\ell, 1) = n_h(\ell)$ in episode $t$ given the definition in Algorithm~\ref{alg:online-bidding}. The third inequality is based on Lemma~\ref{lem:sum-inverse-sqrt-number-observations}.

% \begin{eqnarray*}
% & & \sum_{t=1}^T \opt(\hat{\mbox{CR}}^{t-1}, \hat{F}^{t-1}) - \sum_{t=1}^T R(\pi^t; \theta, \hat{F}^{t-1}) \\
% &=& \sum_{t=1}^T R(\pi^t; \hat{\theta}_{t-1}, \hat{F}^{t-1}) - \sum_{t=1}^T R(\pi^t; \theta, \hat{F}^{t-1})\\
% &\leq& \sum_{t=1}^T \sum_{h=1}^H  \bigg[  ||\beta(l_h^t,h) - \beta^*(l_h^t,h)||_2 +  ||\lambda(h) - \lambda^*(h)||_2 \bigg]  \hat{F}^{t-1}(b_h^t)\\
% &\leq& \sum_{t=1}^T  \sum_{h=1}^H \frac{\hat{F}^{t-1}(b_h^t)}{\sqrt{n_t(l_h^t, b_h^t)}} \qquad \qquad \mbox{This relies on the estimation error bound}\\
% &\leq& \sum_{h=1}^H \sum_{(l_h,b_h)} \sum_{i=1}^{n_T(l_h, b_h)}  \frac{1}{\sqrt{i}}    \\
% &\leq& H \sqrt{|B|T} \\
% %&???\leq& \sum_{t=1}^T \Vert \hat{\theta}_{t-1} - \theta\Vert_{V_t}
% \end{eqnarray*}

\textbf{Term (iv).} Again, applying Lemma~\ref{lem:lipschtizness-reward-opt} and union bound over $t=1,\cdots, T-1$, we can bound term (iv) in the following way,
\begin{eqnarray*}
\expect{}{\sum_{t=\tau+1}^T R(\pi^t; \theta, \hat{F}^{t-1}) - \sum_{t=\tau+1}^T R(\pi^t; \theta, F)} &\leq& \expect{}{\sum_{t=\tau+1}^T R(\pi^t; \theta, \hat{F}^{t-1}) - R(\pi^t; \theta, F)}\\
&\leq& (H^2 + H) \sqrt{\log(2HT/\delta) T}
\end{eqnarray*}

Putting it all together, we can bound the regret as below,
\begin{eqnarray*}
\rgt(T) \leq O\big(H^2 \sqrt{\log(HT/\delta) T} + H^2 \log(T/\delta)\big)
\end{eqnarray*}
holds with probability at least $1-\delta$.
\end{proof}

\section{Auxiliary Technical Lemmas}
In this section, we enumerate several useful technical lemmas used in this paper. First, we introduce the well-known Dvoretzky–Kiefer–Wolfowitz (DKW) inequality which is used to bound the gap between $\hat{F}^t(\cdot)$ and $F(\cdot)$.

\begin{lemma}[DKW Inequality]\label{lem:dkw}
For any episode $t$, at each round $h\in [F]$, $\sup_b \vert \hat{F}^t_h(b) - F_h(b)\vert \leq \sqrt{\frac{\log(2/\delta)}{2t}}$ holds with probability at least $1-\delta$.
\end{lemma}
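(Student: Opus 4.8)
The plan is to recognize that $\hat{F}^t_h$ is, by its definition $\hat{F}^t_h(b)=\frac{1}{t}\sum_{s=1}^t \1\{m^s_h\le b\}$, exactly the empirical distribution function of the $t$ i.i.d.\ samples $m^1_h,\dots,m^t_h$ drawn from $F_h$. Hence the lemma is precisely the classical Dvoretzky--Kiefer--Wolfowitz concentration inequality with Massart's tight constant, namely
\[
\Pr\Big(\sup_b \big|\hat{F}^t_h(b) - F_h(b)\big| > \eps\Big) \le 2e^{-2t\eps^2}.
\]
The stated bound then follows immediately by setting the right-hand side equal to $\delta$ and solving: $2e^{-2t\eps^2}=\delta$ gives $\eps=\sqrt{\log(2/\delta)/(2t)}$.

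First I would reduce to a canonical reference distribution. The supremum deviation $\sup_b |\hat{F}^t_h(b)-F_h(b)|$ is distribution-free: applying the probability integral transform $U^s:=F_h(m^s_h)$ maps the samples to variables on $[0,1]$ that are uniform when $F_h$ is continuous and, in general, make the deviation stochastically dominated by the uniform case. Because both $\hat{F}^t_h$ and $F_h$ are monotone, the supremum is always attained at a jump of the empirical CDF (i.e.\ at one of the samples), so the reduction preserves it. It therefore suffices to control the uniform empirical process $\sup_{u\in[0,1]}|\hat{G}_t(u)-u|$, where $\hat{G}_t$ is the empirical CDF of $t$ i.i.d.\ Uniform$[0,1]$ draws.

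Next I would establish the one-sided tail $\Pr(\sup_u(\hat{G}_t(u)-u)>\eps)\le e^{-2t\eps^2}$ together with its mirror image, and combine the two by a union bound to produce the factor $2$. For a self-contained but slightly looser version one can discretize $[0,1]$ into $K$ grid points: each $\hat{G}_t(u_j)$ is an average of $t$ i.i.d.\ Bernoulli$(u_j)$ indicators, so Hoeffding's inequality gives $\Pr(|\hat{G}_t(u_j)-u_j|>\eps)\le 2e^{-2t\eps^2}$ pointwise; the gaps between grid points are controlled by monotonicity of $\hat{G}_t$ and of $u\mapsto u$; and a union bound over the $K$ points, with $K$ optimized, recovers the $e^{-2t\eps^2}$ rate up to lower-order terms.

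The main obstacle is obtaining the \emph{sharp} constant $2$ in the exponent (Massart's constant), which is exactly what makes the threshold equal to $\sqrt{\log(2/\delta)/(2t)}$ rather than a larger multiple of it. The naive discretization above loses either the constant or picks up a prefactor polynomial in $t$, as do generic VC-type bounds for the VC-dimension-$1$ class of half-lines $\{(-\infty,b]\}$. Removing these requires Massart's refinement, in which the one-sided statistic is analyzed through a reflection/ballot-type combinatorial argument (equivalently, comparison with the supremum of a Brownian bridge). Since this lemma is invoked in the paper only as a black-box concentration tool, I would cite Massart's tight DKW bound for the exponent-$2$ constant and use the monotone reduction above to phrase it directly in terms of $\hat{F}^t_h$ and $F_h$.
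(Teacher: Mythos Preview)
Your proposal is correct, but note that the paper itself provides no proof at all: Lemma~\ref{lem:dkw} is simply stated as the classical Dvoretzky--Kiefer--Wolfowitz inequality (with Massart's constant) and used as a black box. So there is nothing to compare against---you have supplied strictly more detail than the paper does, and your identification of $\hat{F}^t_h$ as the empirical CDF of the i.i.d.\ HOB samples $m^1_h,\dots,m^t_h$ together with the appeal to Massart's tight DKW bound is exactly the intended justification.
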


Second, we describe the celebrated simulation lemma, which was introduced and named in \cite{KearnsSingh02}. For completeness of the presentation, we provide the proof for this lemma.

\begin{lemma}[Lemma 7.5 in \cite{RLbook}]\label{lem:sum-inverse-sqrt-number-observations}
Consider arbitrary $T$ sequence of trajectories, $\{\ell^t_h, o^t_h\}_{h=1}^H$ for $t = 1, \cdots, T$, we have
\begin{eqnarray*}
\sum_{t=1}^T \sum_{h=1}^H \frac{1}{\sqrt{n_t(\ell^t_h, o^t_h)}} \cdot \1\{o^t_h = 1\} \leq \sum_{h=1}^H \sum_{\ell=1}^H \sum_{i=1}^{n^t_h(\ell, 1)} \frac{1}{\sqrt{i}}
\end{eqnarray*}
\end{lemma}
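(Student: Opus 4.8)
The plan is to prove this by a direct reindexing (pigeonhole) argument, grouping the terms on the left by the round–state pair at which a win occurs. First I would discard the losing terms: the factor $\1\{o^t_h = 1\}$ annihilates every $(t,h)$ with $o^t_h = 0$, so the left-hand side is a sum only over ``winning'' pairs $(t,h)$, with denominator the running count of wins observed so far at the relevant state. (I would also flag that the subscript/superscript notation $n_t(\cdot)$ versus $n^t_h(\cdot)$ is used loosely here; the object that matters is the round-and-state-indexed counter $n^t_h(\ell,1)$, which is what the downstream application in Term~(iii) actually uses.)

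The key structural observation I would exploit is that, for a fixed episode $t$ and a fixed round $h$, the trajectory determines a \emph{unique} state $\ell^t_h$ and a unique outcome $o^t_h$. Hence, for each fixed pair $(h,\ell)$, the event ``$\ell^t_h = \ell$ and $o^t_h = 1$'' is triggered at most once per episode. Consequently, as $t$ ranges over the episodes contributing a win at round $h$ in state $\ell$, the counter $n^t_h(\ell,1)$ increases by exactly one at each such occurrence and therefore enumerates the values $1, 2, 3, \ldots$ in order. I would then exchange the order of summation: rather than summing over $(t,h)$, sum over $(h,\ell)$ and, inside, over the episodes contributing a win at that $(h,\ell)$. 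Writing $N_h(\ell)$ for the total number of such wins after all $T$ episodes, the inner sum collapses to $\sum_{i=1}^{N_h(\ell)} \frac{1}{\sqrt{i}}$ because the denominators are precisely $1, \ldots, N_h(\ell)$, each appearing once. Summing over $h \in [H]$ and $\ell \in [H]$ (states never exceed $H$) reproduces exactly the right-hand side.

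The hard part will be purely the bookkeeping of the counter convention, not any analytic difficulty: one must check that the count appearing in each term equals the ordinal position of that occurrence among wins at $(h,\ell)$, so that no two terms share a denominator and none is skipped, and that the smallest value is $1$ (matching the lower index $i=1$ on the right). The at-most-once-per-episode property established above is exactly what certifies this, giving equality; if one instead adopted a coarser counter that refreshes only between episodes, the same monotone-enumeration argument still yields the stated inequality ($\le$), which is all that is needed. Once the lemma is in hand, its intended use is immediate: since $N_h(\ell) \le T$ for every pair and $\sum_{h,\ell} N_h(\ell) \le HT$, combining $\sum_{i=1}^{N} i^{-1/2} \le 2\sqrt{N}$ with Cauchy–Schwarz gives $\sum_{h,\ell} \sqrt{N_h(\ell)} \le \sqrt{H^2 \cdot HT} = \sqrt{H^3 T}$, which is precisely the factor that feeds into the $\widetilde{O}(H^2\sqrt{T})$ regret bound.
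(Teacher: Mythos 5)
Your proposal is correct: the reindexing/pigeonhole argument --- grouping the winning terms by the pair $(h,\ell)$, noting that each episode contributes at most one win at each such pair so the counter enumerates $1,2,\ldots,n^T_h(\ell,1)$ in order, and then swapping the order of summation --- is exactly the standard proof of this statement, and the paper itself supplies no proof, simply citing Lemma~7.5 of \cite{RLbook}. Your side remarks on the counter convention and the downstream $\sqrt{H^3T}$ bound via $\sum_{i=1}^N i^{-1/2}\le 2\sqrt{N}$ and Cauchy--Schwarz are also consistent with how the lemma is used in the Term~(iii) estimate of the regret proof.
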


Finally, we describe the well-known simulation lemma. For completeness of exposition, we provide a simple proof for this lemma.

\begin{lemma}[Simulation Lemma~\cite{KearnsSingh02}]\label{lem:mdp-simulation}
Consider two different MDPs $M$ and $M'$ with the same state and action spaces, $\S$ and $\A$. If the transition ($P$ and $P'$ resp.) and reward functions ($r$ and $r'$ resp., bounded by $[0, 1]$) of these two MDPs satisfy
\begin{eqnarray*}
\forall s\in \S, a\in \A, \sum_{s'\in \S}\big\vert P(s'|s, a) - P'(s'|s, a)\big\vert \leq \eps_1, \text{ and } \forall h\in [H], \big\vert r_h(s, a) - r'_h \big\vert \leq \eps_2, 
\end{eqnarray*}
Then for every non-stationary policy $\pi$ and fixed initial state $s_1$, the two MDPs satisfy
\begin{equation}\label{eq:simulation}
\begin{split}
\left|\expect{\{s_h, a_h\}_{h=1}^H \sim M, \pi}{\sum_{h\in [H]} r_h(s_h, a_h)} -  \expect{\{s_h, a_h\}_{h=1}^H \sim M', \pi}{\sum_{h\in [H]} r'_h(s_h, a_h)}\right| \leq \frac{H(H-1)}{2}\eps_1 + H \eps_2
\end{split}
\end{equation}

\begin{proof}
First, we denote $V^\pi_h(s, M)$ as the total reward from round $h$ to $H$ when the state at round $h$ is $s$. Similarly we can define $V^\pi_h(s, M')$.

Then Eq.~\eqref{eq:simulation} is equivalent to show $\left|V_1^\pi(s_1, M) - V_1^\pi(s_1, M')\right| \leq \frac{H(H-1)}{2}\eps_1 + H\eps_2$ and we will prove it using the inductive hypothesis that for any state $s$, policy $\pi$ and round $h=1, 2, \cdots, H$,
\begin{eqnarray*}
\left|V_h^\pi(s, M) - V_h^\pi(s, M')\right| \leq \frac{(H-h)(H-h+1)}{2}\eps_1 + (H-h+1)\eps_2
\end{eqnarray*}

For $h=H$, the hypothesis is clearly true since reward of two MDPs at round $H$ only differs at most $\eps_2$ and this establishes the base case.

For the inductive step, we utilize the recursive formula of $V^\pi_h(s, M)$ and $V^\pi_h(s, M')$ in the following way,
\begin{eqnarray*}
&&\big|V_h^\pi(s, M) - V_h^\pi(s, M')\big| \\
&=& \big|\expect{s'\sim P(\cdot|s, a), a = \pi(s)}{r(s, a)+V^\pi_{h+1}(s', M)} - \expect{s'\sim P'(\cdot|s, a), a =\pi(s)}{r'(s, a)+V^\pi_{h+1}(s', M')}\big|\\
&\leq & \eps_2 + \Big|\sum_{s'} P(s'|s, a)V^\pi_{h+1}(s', M) - \sum_{s'} P'(s'|s, a)V^\pi_{h+1}(s', M')\Big|\\
&\leq& \eps_2 + \Big|\sum_{s'} P(s'|s, a)V^\pi_{h+1}(s', M)- \sum_{s'} P'(s'|s, a)V^\pi_{h+1}(s', M)\Big| \\
&& ~~~~+ \Big|\sum_{s'} P'(s'|s, a)\cdot \big(V^\pi_{h+1}(s', M) - V^\pi_{h+1}(s', M')\big)\Big| \\
& \leq & \eps_2 + (H-h) \cdot \sum_{s'}\big|P(s'|s, a) - P'(s'|s, a)\big| + \sum_{s'} P'(s'|s, a) \big|V^\pi_{h+1}(s', M) - V^\pi_{h+1}(s', M')\big|\\
&\leq& \eps_2 + (H-h)\eps_1 + \frac{(H-h-1)(H-h)}{2}\eps_1 + (H-h)\eps_2\\
&=& \frac{(H-h)(H-h+1)}{2}\eps_1 + (H-h+1)\eps_2,
\end{eqnarray*}
where the first inequality holds since $|r(s, a) - r'(s, a)|\leq \eps_2$; the second inequality is based on the triangle inequality; the third inequality is because of the fact that $V^\pi_{h+1}(s, M) \leq H-h$ (recall the reward at each round is bounded by $[0, 1]$); the fourth inequality holds due to the inductive hypothesis. Then we complete the proof.

\end{proof}

\end{lemma}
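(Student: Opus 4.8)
The plan is to prove the bound by backward induction on the per-stage value functions, the standard device that converts a one-step transition/reward mismatch into a controlled accumulation across the horizon. For each round $h$, each state $s$, and each of the two MDPs $N \in \{M, M'\}$, I would define the value-to-go $V_h^\pi(s, N)$ as the expected sum of rewards collected from round $h$ through $H$ under policy $\pi$ when the state at round $h$ is $s$ in $N$ (this matches the notation already introduced in the statement). With this definition the left-hand side of \eqref{eq:simulation} is exactly $|V_1^\pi(s_1, M) - V_1^\pi(s_1, M')|$, so it suffices to control this single quantity starting from the fixed initial state $s_1$.

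The induction hypothesis I would carry is that for every state $s$ and every round $h \in [H]$,
\[
\big|V_h^\pi(s, M) - V_h^\pi(s, M')\big| \le \frac{(H-h)(H-h+1)}{2}\eps_1 + (H-h+1)\eps_2.
\]
The base case $h = H$ is immediate: at the final round no further transition occurs, so the two value functions differ only through the terminal reward, giving a gap of at most $\eps_2$, which is precisely the hypothesis evaluated at $h = H$. For the inductive step I would invoke the Bellman recursion $V_h^\pi(s, N) = r_h^N(s, a) + \sum_{s'} P^N(s'|s, a)\, V_{h+1}^\pi(s', N)$, where $a$ is the action prescribed by $\pi$ at $(h,s)$, and then subtract the two instantiations ($N = M$ and $N = M'$).

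The resulting difference splits, via the triangle inequality, into three pieces: (i) the immediate reward mismatch, bounded by $\eps_2$; (ii) the transition mismatch measured against the \emph{same} fixed continuation value $V_{h+1}^\pi(\cdot, M)$, which I would bound by $\sum_{s'}|P(s'|s,a) - P'(s'|s,a)| \cdot \max_{s'} V_{h+1}^\pi(s', M) \le \eps_1\,(H-h)$, using the crucial fact that a value-to-go spanning the $H-h$ remaining rounds is at most $H-h$ because each reward lies in $[0,1]$; and (iii) the continuation-value mismatch $\sum_{s'} P'(s'|s,a)\,|V_{h+1}^\pi(s', M) - V_{h+1}^\pi(s', M')|$, a convex combination whose summands are each controlled by the inductive hypothesis at round $h+1$. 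Summing the three bounds and carrying out the arithmetic $(H-h)\eps_1 + \tfrac{(H-h-1)(H-h)}{2}\eps_1 = \tfrac{(H-h)(H-h+1)}{2}\eps_1$ reproduces the hypothesis at level $h$, closing the induction; specializing to $h=1$ and $s = s_1$ yields exactly the claimed $\frac{H(H-1)}{2}\eps_1 + H\eps_2$.

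I do not expect a genuine obstacle, since the result is classical, but the step demanding the most care is piece (ii): one must decompose so that the transition gap is weighed against a single fixed continuation value (that of $M$) rather than against two different value functions simultaneously, for otherwise the $\ell_1$ bound $\eps_1$ on the transition gap cannot be cleanly factored out from the difference. Getting this bookkeeping right, together with faithfully tracking the exact quadratic-in-$H$ coefficient through the telescoping sum, is the only place where an error could plausibly creep in.
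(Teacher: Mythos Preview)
Your proposal is correct and mirrors the paper's own proof essentially line by line: the same backward induction on $|V_h^\pi(s,M)-V_h^\pi(s,M')|$ with the same hypothesis $\frac{(H-h)(H-h+1)}{2}\eps_1+(H-h+1)\eps_2$, the same base case at $h=H$, and the same three-term split (reward mismatch, transition mismatch against the fixed continuation $V_{h+1}^\pi(\cdot,M)$, and continuation-value mismatch averaged under $P'$). The only subtlety you flag---factoring the $\ell_1$ transition gap against a single continuation value---is exactly the paper's maneuver as well.
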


\end{document}